\pgfplotsset{compat=newest}
\numberwithin{equation}{section} 
\tikzstyle{every node}=[font=\small]
\newcommand{%
\tikzsetnextfilename{}%
    \filemodCmp{.tikz}{tikz/.pdf}%
        {\tikzset{external/remake next}}{}%
    \input{.tikz}%
}[2]{%
\tikzsetnextfilename{#2}%
    \filemodCmp{#1#2.tikz}{#1tikz/#2.pdf}%
        {\tikzset{external/remake next}}{}%
    \input{#1#2.tikz}%
}
\newcolumntype{L}[1]{>{\RaggedRight\hspace{0pt}}p{#1}}
\newcolumntype{R}[1]{>{\RaggedLeft\hspace{0pt}}p{#1}}
\newcommand{\bR}{\mathbb{R}}
\renewcommand{\phi}{\varphi}
\renewcommand{\geq}{\geqslant}
\renewcommand{\leq}{\leqslant}
\newcommand{\furlp}[1]{\colorbox{blue!10}{\href{run:/home/fulong/academia/library/papers/#1.pdf}{D}}}
\newcommand{\furlb}[1]{\colorbox{blue!10}{\href{run:/home/fulong/academia/library/books/#1.pdf}{D}}}
\newcommand{\bx}{\mathbf{x}}
\newcommand{\bu}{\mathbf{u}}
\newcommand{\by}{\mathbf{y}}
\newcommand{\bp}{\mathbf{p}}
\newcommand{\bv}{\mathbf{v}}
\newcommand{\be}{\mathbf{e}}
\newcommand{\bzero}{\mathbf{0}}
\newcommand{\bw}{\mathbf{w}}
\newcommand{\mL}{\mathcal{L}}
\newcommand{\mN}{\mathcal{N}}
\newcommand{\mO}{\mathcal{O}}
\newcommand{\mC}{\mathcal{C}}
\newcommand{\mU}{\mathcal{U}}
\newcommand{\mD}{\mathcal{D}}
\newcommand{\mF}{\mathcal{F}}
\newcommand{\mS}{\mathcal{S}}
\newcommand{\mM}{\mathcal{M}}
\newcommand{\specialcell}[2][c]{ \begin{tabular}[#1]{@{}c@{}}#2\end{tabular}}
\newcommand{\myargmin}[1] {\underset{#1}{\text{argmin }}}
\newtheorem{proposition}{Proposition}
\newtheorem{definition}{Definition}
\begin{document}
\title{\LARGE \bf Dex-Net 3.0:  Computing Robust Vacuum Suction Grasp Targets in Point Clouds using a New Analytic Model and Deep Learning
	\vspace{-2ex}}
\author{Jeffrey Mahler$^1$, Matthew Matl$^1$, Xinyu Liu$^1$, Albert Li$^1$, David Gealy$^1$, Ken Goldberg$^{1,2}$
%\vspace{-2ex}
\thanks{{\small $^1$ Dept. of Electrical Engineering and Computer Science;}}%
\thanks{{\small $^2$ Dept. of Industrial Operations and Engineering Research; }}%
%\thanks{$^{1-2}$ University of California, Berkeley, USA}%
\thanks{{\small AUTOLAB University of California, Berkeley, USA {\tt\small \{jmahler, mmatl, xinyuliu, alberthli, dgealy, goldberg\}@berkeley.edu}} }
}
\maketitle

\begin{abstract}
Vacuum-based end effectors are widely used in industry and are often preferred over parallel-jaw and multifinger grippers due to their ability to lift objects with a single point of contact.
Suction grasp planners often target planar surfaces on point clouds near the estimated centroid of an object.
In this paper, we propose a compliant suction contact model that computes the quality of the seal between the suction cup and local target surface and a measure of the ability of the suction grasp to resist an external gravity wrench.
To characterize grasps, we estimate robustness to perturbations in end-effector and object pose, material properties, and external wrenches.
We analyze grasps across 1,500 3D object models to generate Dex-Net 3.0, a dataset of 2.8 million point clouds, suction grasps, and grasp robustness labels.
We use Dex-Net 3.0 to train a Grasp Quality Convolutional Neural Network (GQ-CNN) to classify robust suction targets in point clouds containing a single object.
We evaluate the resulting system in 350 physical trials on an ABB YuMi fitted with a pneumatic suction gripper.
When evaluated on novel objects that we categorize as Basic (prismatic or cylindrical), Typical (more complex geometry), and Adversarial (with few available suction-grasp points) Dex-Net 3.0 achieves success rates of 98$\%$, 82$\%$, and 58$\%$ respectively, improving to 81$\%$ in the latter case when the training set includes only adversarial objects.
%When the object shape, pose, and mass properties are known, the model achieves 99$\%$ precision on a dataset of objects with Adversarial geometry such as sharply curved surfaces.
%When those properties are not known, our GQ-CNN grasp planner achieves over 97$\%$ precision a dataset of 50 Basic and Typical objects.
Code, datasets, and supplemental material can be found at http://berkeleyautomation.github.io/dex-net.
\end{abstract}

\section{Introduction}
\seclabel{introduction}

Suction grasping is widely-used for pick-and-place tasks in industry and warehouse order fulfillment.
As shown in the Amazon Picking Challenge, suction has an advantage over parallel-jaw or multifinger grasping due to its ability to reach into narrow spaces and pick up objects with a single point of contact.
However, while a substantial body of research exists on parallel-jaw and multifinger grasp planning~\cite{bohg2014data}, comparatively little research has been published on planning suction grasps.

While grasp planning searches for gripper configurations that maximize a quality metric derived from mechanical wrench space analysis~\cite{murray1994mathematical}, human labels~\cite{saxena2008robotic}, or self-supervised labels~\cite{levine2016learning},
suction grasps are often planned directly on point clouds using heuristics such as grasping near the object centroid~\cite{hernandez2016team} or at the center of planar surfaces~\cite{correll2016analysis, domae2014fast}.
These heuristics work well for prismatic objects such as boxes and cylinders but may fail on objects with non-planar surfaces near the object centroid, which is common for industrial parts and household objects such as staplers or children's toys.
Analytic models of suction cups for grasp planning exist, but they typically assume that a vacuum seal has already been formed and that the state (e.g. shape and pose) of the object is perfectly known~\cite{bahr1996design, kolluru1998modeling, mantriota2007theoretical}.
In practice a robot may need to form seals on non-planar surfaces while being robust to external wrenches (e.g. gravity and disturbances), sensor noise, control imprecision, and calibration errors, which are significant factors when planning grasps from point clouds.

\begin{figure}[t!]
\centering
\includegraphics[width=\textwidth]{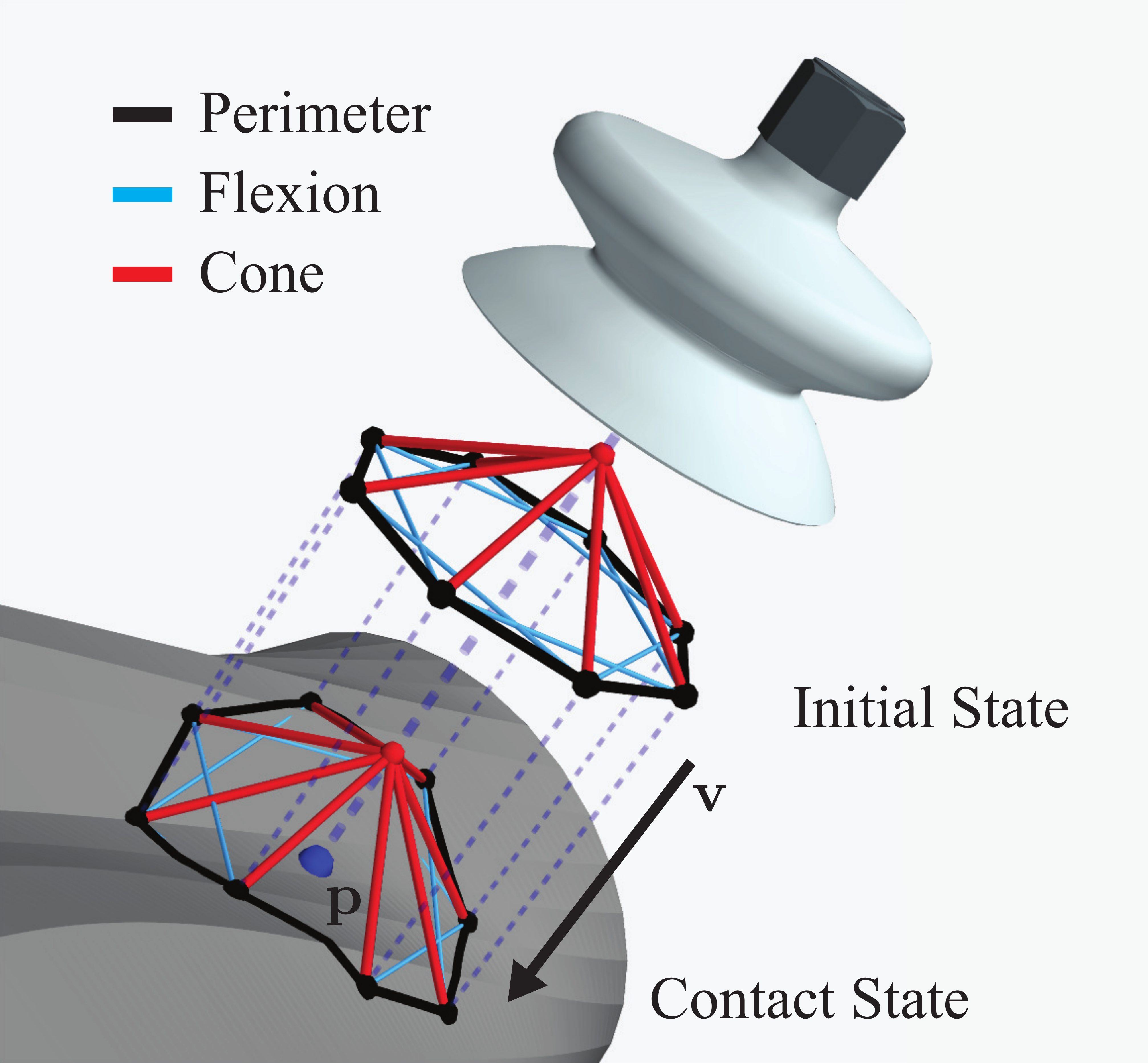}
\caption{The quasi-static spring model, $C$, used for determining when seal formation is feasible. The model contains three types of springs -- perimeter, flexion, and cone springs. An initial state for $C$ is chosen given a target point $\mathbf{p}$ and an approach direction $\mathbf{v}$. Then, a contact state for $C$ is computed so that $C$'s perimeter springs form a complete seal against object mesh $M$. Seal formation is deemed feasible if the energy required to maintain this contact state is sufficiently low in every spring.
\vspace*{-20pt}}
\figlabel{spring-approach}
\end{figure}

We propose a novel compliant suction contact model for rigid, non-porous objects that consists of two components: (1) a test for whether a seal can be formed between a suction cup and a target object surface and (2) an analysis of the ability of the suction contact to resist external wrenches.
We use the model to evaluate grasp robustness by analyzing seal formation and wrench resistance under perturbations in object pose, suction tip pose, material properties, and disturbing wrenches using Monte-Carlo sampling similar to that in the Dexterity Network (Dex-Net) 1.0~\cite{mahler2016dexnet}.

This paper makes four contributions:
\begin{enumerate}
	\item A compliant suction contact model that quantifies seal formation using a quasi-static spring system and the ability to resist external wrenches (e.g.  gravity) using a contact wrench basis derived from the ring of contact between the cup and object surface.
	\item Robust wrench resistance: a robust version of the above model under random disturbing wrenches and perturbations in object pose, gripper pose, and friction.
    \item Dex-Net 3.0, a dataset of 2.8 million synthetic point clouds annotated with suction grasps and grasp robustness labels generated by analyzing robust wrench resistance for approximately 375k grasps across 1,500 object models.
%     \item A GQ-CNN architecture that maps a 3D point cloud and the center, depth, and approach axis of a suction grasp to a scalar estimate of grasp robustness.
	\item Physical robot experiments measuring the precision of robust wrench resistance both with and without knowledge of the target object's shape and pose.
	%\item A taxonomy of object shapes for robot grasping experiments that differentiates 3 classes of objects (Basic, Typical, and Adversarial).
%	 \item A measure of grasp "precision" where success rate is a function of predicted confidence analogous to the receiver operating characteristic (ROC) curve in machine learning to differentiate between Type I and Type II grasp failures.
\end{enumerate}

We perform physical experiments using an ABB YuMi robot with a silicone suction cup tip to compare the precision of a GQ-CNN-based grasping policy trained on Dex-Net 3.0 with several heuristics such as targeting planar surfaces near object centroids.
We find that the method achieves success rates of 98$\%$, 82$\%$, and 58$\%$ on datasets of Basic (prismatic or cylindrical), Typical (more complex geometry), and Adversarial (with few available suction-grasp points), respectively.

\section{Related Work}
\seclabel{related-work}
%While rigid fingers can only apply forces directed inward on object surfaces, suction cups can pull an object using forces generated by an air pressure gradient between the inside and outside of the cup~\cite{marquardt1999introduction}.
%This makes suction advantageous when it is difficult or impossible to wrap fingers around an object due to size or the presence of obstacles.
End-effectors based on suction are widely used in industrial applications such as warehouse order fulfillment, handling limp materials such as fabric~\cite{kolluru1998modeling}, and robotics applications such as the Amazon Picking Challenge~\cite{correll2016analysis}, underwater manipulation~\cite{stuart2015suction}, or wall climbing~\cite{bahr1996design}.
Our method builds on models of deformable materials, analyses of the wrenches that suction cups can exert, and data-driven grasp planning.

\subsection{Suction Models}
Several models for the deformation of stiff rubber-like materials exist in the literature.
Provot et al.~\cite{provot1995deformation} modeled sheets of stiff cloth and rubber using a spring-mass system with several types of springs.
Hosseini et al.~\cite{ali2010review} provided a survey of more modern constitutive models of rubber that are often used in Finite Element Analysis (FEA) packages for more realistic physics simulations.
In order to rapidly evaluate whether a suction cup can form a seal against an object's surface, we model the cup as a quasi-static spring system with a topology similar to the one in~\cite{provot1995deformation} and estimate the deformation energy required to maintain a seal.

In addition, several models have been developed to check for static equilibrium assuming a seal between the suction cup and the object's surface.
Most models consider the suction cup to be a rigid object and model forces into the object along the surface normal, tangential forces due to surface friction, and pulling forces due to suction~\cite{kolluru1998modeling, stuart2015suction, valencia20173d}.
Bahr et al.~\cite{bahr1996design} augmented this model with the ability to resist moments about the center of suction to determine the amount of vacuum pressure necessary to keep a climbing robot attached to a vertical wall.
Mantriota~\cite{mantriota2007theoretical} modeled torsional friction due to a contact area between the cup and object similar to the soft finger contact model used in grasping~\cite{kao2008contact}.
%Xin et al.~\cite{xin2016development} experimentally characterized the relationship between the suction pressure distribution, air power consumption, and gasket height, finding that suction force is approximately proportional to air power.
Our model extends these methods by combining models of torsional friction~\cite{mantriota2007theoretical} and contact moments~\cite{bahr1996design} in a compliant model of the ring of contact between the cup and object.

\subsection{Grasp Planning}

% Intro
The goal of grasp planning is to select a configuration for an end-effector that enables a robot to perform a task via contact with an object while resisting external perturbations~\cite{bohg2014data}, which can be arbitrary~\cite{ferrari1992} or task-specific~\cite{krug2017grasp}.
A common approach is to select a configuration that maximizes a quality metric (or reward) based on wrench space analysis~\cite{murray1994mathematical}, robustness to perturbations~\cite{weisz2012pose}, or a model learned from human labels~\cite{kappler2015leveraging} or self-supervision~\cite{pinto2016supersizing}.

Several similar metrics exist for evaluating suction grasps.
One common approach is to evaluate whether or not a set of suction cups can lift an object by applying an upwards force~\cite{kolluru1998modeling, stuart2015suction, tsourveloudis2000suction, valencia20173d}.
%Tsourveloudis et al.~\cite{} developed a quality measure for suction grasps based on object porosity, wrenches due to gravity, and sheer forces due to motion during transport.
Domae et al.~\cite{domae2014fast} developed a geometric model to evaluate suction success by convolving target locations in images with a desired suction contact template to assess planarity.
Heuristics for planning suction grasps from point clouds have also been used extensively in the Amazon Robotics Challenge.
In 2015, Team RBO~\cite{eppner2016lessons} won by pushing objects from the top or side until suction was achieved, and Team MIT~\cite{yu2016summary} came in second place by suctioning on the centroid of objects with flat surfaces.
In 2016, Team Delft~\cite{hernandez2016team} won the challenge by approaching the estimated object centroid along the inward surface normal.
In 2017, Cartman~\cite{morrison2017cartman} won the challenge and ranked suction grasps according to heuristics such as maximizing distance to the segmented object boundary and MIT~\cite{zeng2017robotic} performed well using a neural network trained to predict grasp affordance maps from human labeled RGB-D point clouds.
In this work, we present a novel metric that evaluates whether a single suction cup can resist external wrenches under perturbations in object / gripper poses, friction coefficient, and disturbing wrenches.

%% Vision-based
This paper also extends empirical, data-driven approaches to grasp planning based on images and point clouds~\cite{bohg2014data}.
A popular approach is to use human labels of graspable regions in RGB-D images~\cite{lenz2015deep} or point clouds~\cite{kappler2015leveraging} to learn a grasp detector with computer vision techniques.
As labeling may be tedious for humans, an alternative is to automatically collect training data from a physical robot~\cite{levine2016learning, pinto2016supersizing}.
To reduce the time-cost of data collection, recent research has proposed to generate labels in simulation using physical models of contact~\cite{johns2016deep, kappler2015leveraging}.
Mahler et al.~\cite{mahler2017dex} demonstrated that a GQ-CNN trained on Dex-Net 2.0, a dataset of 6.7 million point clouds, grasps, and quality labels computed with robust quasi-static analysis, could be used to successfully plan parallel-jaw grasps across a wide variety of objects with 99$\%$ precision.
In this paper, we use a similar approach to generate a dataset of point clouds, grasps, and robustness labels for a suction-based end-effector.

\section{Problem Statement}
\seclabel{problem-statement}
Given a point cloud from a depth camera, our goal is to find a robust suction grasp (target point and approach direction) for a robot to lift an object in isolation on a planar worksurface and transport it to a receptacle.
We compute the suction grasp that maximizes the probability that the robot can hold the object under gravity and perturbations sampled from a distribution over sensor noise, control imprecision, and random disturbing wrenches.
%We model the probability that a candidate suction grasp (target point and approach direction) can lift and transport an object under perturbations due to sensor noise and disturbing wrenches and solve for the suction grasp that maximizes this probability.

\subsection{Assumptions}
\seclabel{assumptions}
Our stochastic model makes the following assumptions:
\begin{enumerate}
	\item Quasi-static physics (e.g. inertial terms are negligible) with Coulomb friction.
	\item Objects are rigid and made of non-porous material.
	\item Each object is singulated on a planar worksurface in a stable resting pose~\cite{goldberg1999part}.
	\item A single overhead depth sensor with known intrinsics, position, and orientation relative to the robot.
	\item A vacuum-based end-effector with known geometry and a single disc-shaped suction cup made of linear-elastic material.
\end{enumerate}

\subsection{Definitions}
\seclabel{description}

A robot observes a single-view {\bf point cloud} or depth image, $\by$, containing a singulated object.
The goal is to find the most robust suction {\bf grasp} $\bu$ that enables the robot to lift an object and transport it to a receptacle, where grasps are parametrized by a target point $\bp \in \bR^3$ and an approach direction $\bv \in \mS^2$.
Success is measured with a binary {\bf grasp reward function} $R$, where $R=1$ if the grasp $\bu$ successfully transports the object, and $R=0$ otherwise.

The robot may not be able to predict the success of suction grasps exactly from point clouds for several reasons.
First, the success metric depends on a {\bf state} $\bx$ describing the object's geometric, inertial, and material properties $\mO$ and the pose of the object relative to the camera, $T_o$, but the robot does not know the true state due to: (a) noise in the depth image and (b) occlusions due to the single viewpoint.
Second, the robot may not have perfect knowledge of external wrenches (forces and torques) on the object due to gravity or external disturbances.

This probabilistic relationship is described by an {\bf environment} consisting of a grasp success distribution modeling $\mathbb{P}(R\mid \bx, \bu)$, the ability of a grasp to resist random disturbing wrenches, and an observation model $p(\by \mid \bx)$.
This model induces a probability of success for each grasp conditioned on the robot's observation:
\begin{definition}
The {\it robustness} of a grasp $\bu$ given a point cloud $\by$ is the probability of grasp success under uncertainty in sensing, control, and disturbing wrenches: $Q(\by, \bu) = \mathbb{P} \left( R \mid \by, \bu \right)$.
\end{definition}
Our environment model is described in \secref{dataset} and further details are given in the supplemental file.

\subsection{Objective}
\seclabel{objective}
Our ultimate goal is to find a grasp that maximizes robustness given a point cloud, $\pi^*(\by) = \text{argmax}_{\bu \in \mC} Q(\by, \bu)$, where $\mC$ specifies constraints on the set of available grasps, such as collisions or kinematic feasibility.
We approximate $\pi^*$ by optimizing the weights $\theta$ of a deep Grasp Quality Convolutional Neural Network (GQ-CNN) $Q_{\theta}(\by, \bu)$ on a training dataset $\mD = \left\{ (\by_i, \bu_i, R_i) \right\}_{i=1}^N$ consisting of reward values, point clouds, and suction grasps sampled from our stochastic model of grasp success.
Our optimization objective is to find weights $\theta$ that minimize the cross-entropy loss $\mL$ over $\mD$:
\begin{align}
	\theta^* = \myargmin{\theta \in \Theta} \sum \limits_{i=1}^N \mL(R_i, Q_{\theta}(\by_i, \bu_i)).  \label{eq:obj}
\end{align}

\section{Compliant Suction Contact Model}
\seclabel{contact-model}

%We evaluate the success of suction grasps using a novel compliant contact model that analyzes whether:
To quantify grasp robustness, we develop a quasi-static spring model of the suction cup material and a model of contact wrenches that the suction cup can apply to the object through a ring of contact on the suction cup perimeter.
Under our model, the reward $R=1$ if:
\begin{enumerate}
	\item A seal is formed between the perimeter of the suction cup and the object surface.
	\item Given a seal, the suction cup is able to resist an external wrench on the object due to gravity and disturbances.
\end{enumerate}

\subsection{Seal Formation}
A suction cup can lift objects due to an air pressure differential induced across the membrane of the cup by a vacuum generator that forces the object into the cup.
If a gap exists between the perimeter of the cup and the object, then air flowing into the gap may reduce the differential and cause the grasp to fail.
Therefore, a tight seal between the cup and the target object is important for achieving a successful grasp.

\begin{figure}[t!]
\centering
\includegraphics[width=0.9\textwidth]{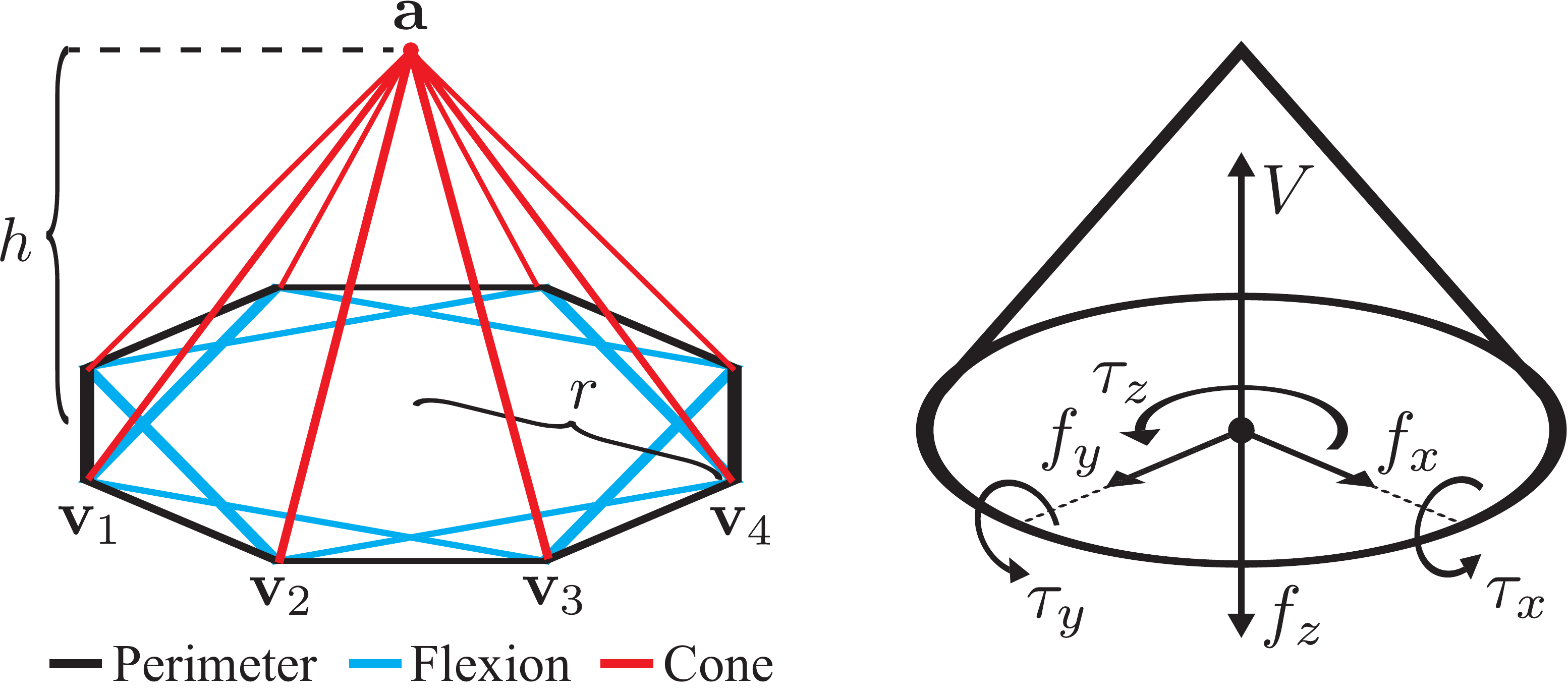}
\caption{Our compliant suction contact model. \textbf{(Left)} The quasi-static spring model used in seal formation computations. This suction cup is approximated by $n=8$ components. Here, $r$ is equal to the radius of the cup and $h$ is equal to the height of the cup. $\{\mathbf{v}_1, \ldots, \mathbf{v}_n\}$ are the base vertices and $\mathbf{a}$ is the apex. \textbf{(Right)} Wrench basis for the suction ring contact model. The contact exerts a constant pulling force on the object of magnitude $V$ and additionally can push or pull the object along the contact $z$ axis with force $f_z$. The suction cup material exerts a normal force $f_N = f_z + V$ on the object through a linear pressure distribution on the ring. This pressure distribution induces a friction limit surface bounding the set of possible frictional forces in the tangent plane $f_t = (f_x, f_y)$ and the torsional moment $\tau_z$, and also induces torques $\tau_x$ and $\tau_y$ about the contact $x$ and $y$ axes due to elastic restoring forces in the suction cup material.
\vspace*{-20pt}}
\figlabel{model}
\end{figure}

To determine when seal formation is possible, we model circular suction cups as a conical spring system $\mC$ parameterized by real numbers $(n, r, h)$, where $n$ is the numer of vertices along the contact ring, $r$ is the radius of the cup, and $h$ is the height of the cup.
See see \figref{model} for an illustration.

Rather than performing a computationally expensive dynamic simulation with a spring-mass model to determine when seal formation is feasible,
we make simplifying assumptions to evaluate seal formation geometrically.
Specifically, we compute a configuration of $\mC$ that achieves a seal by projecting $\mC$ onto the surface of the target object's triangular mesh $\mM$ and evaluate the feasibility of that configuration under quasi-static conditions as a proxy for the dynamic feasibility of seal formation.

In our model, $\mC$ has two types of springs -- \textit{structural} springs that represent the physical structure of the suction cup and \textit{flexion} springs that do not correspond to physical structures but instead are used to resist bending along the cup's surface.
Dynamic spring-mass systems with similar structures have been used in prior work to model stiff sheets of rubber~\cite{provot1995deformation}.
The undeformed structural springs of $\mC$ form a right pyramid with height $h$ and with a base that is a regular $n$-gon with circumradius $r$.
Let $\mathcal{V} = \{\mathbf{v}_1, \mathbf{v}_2, \ldots, \mathbf{v}_n, \mathbf{a}\}$ be the set of vertices of the undeformed right pyramid, where each $\mathbf{v}_i$ is a base vertex and $\mathbf{a}$ is the pyramid's apex.
We define the model's set of springs as follows:
\begin{itemize}
    \item \textbf{Perimeter (Structural) Springs} -- Springs linking vertex $\mathbf{v}_i$ to vertex $\mathbf{v}_{i+1}$, $\forall i \in \{1, \ldots, n\}$.
    \item \textbf{Cone (Structural) Springs} -- Springs linking vertex $\mathbf{v}_i$ to vertex $\mathbf{a}$, $\forall i \in \{1, \ldots, n\}$.
    \item \textbf{Flexion Springs} -- Springs linking vertex $\mathbf{v}_i$ to vertex $\mathbf{v}_{i+2}$, $\forall i \in \{1, \ldots, n\}$.
\end{itemize}

In the model, a complete seal is formed between $\mC$ and $\mM$ if and only if each of the perimeter springs of $\mC$ lies entirely on the surface of $\mM$.
Given a target mesh $\mM$ with a target grasp $\bu = (\bp, \bv)$ for the gripper,
we choose an initial configuration of $\mC$ such that $\mC$ is undeformed and the approach line $(\mathbf{p}, \mathbf{v})$ passes through $\mathbf{a}$ and is orthogonal to the base of $\mC$.
Then, we make the following assumptions to determine a final static contact configuration of $\mC$ that forms a complete seal against $\mM$ (see \figref{spring-approach}):
\begin{itemize}
    \item \textbf{The perimeter springs} of $\mC$ must not deviate from the original undeformed regular $n$-gon when projected onto a plane orthogonal to $\mathbf{v}$.
        This means that their locations can be computed by projecting them along $\mathbf{v}$ from their original locations onto the surface of $\mM$.
    \item \textbf{The apex}, $\mathbf{a}$, of $\mC$ must lie on the approach line $(\mathbf{p}, \mathbf{v})$ and, given the locations of $\mC$'s base vertices, must also lie at a location that keeps the average distance along $\mathbf{v}$ between $\mathbf{a}$ and the perimeter vertices equal to $h$.
\end{itemize}
See the supplemental file for additional details.

Given this configuration, a seal is feasible if:
\begin{itemize}
    \item The cone faces of $\mC$ do not collide with $\mM$ during approach or in the contact configuration.
    \item The surface of $\mM$ has no holes within the contact ring traced out by $\mC$'s perimeter springs.
    \item The energy required in each spring to maintain the contact configuration of $\mC$ is below a real-valued threshold $E$ modeling the maximum deformation of the suction cup material against the object surface.
\end{itemize}
%For the final requirement, we use a per-spring threshold of a $10\%$ change in length, which was used as the spring stretch limit in ~\cite{provot1995deformation}.
We threshold the energy in individual springs rather than the total energy for $\mC$ because air gaps are usually caused by local geometry.
% -- for example, a small groove in the mesh's surface would require a significant amount of deformation energy from small sections of the cup, but the overall deformation energy could still be low if the rest of the mesh is planar.

\subsection{Wrench Space Analysis}
To determine the degree to which the suction cup can resist external wrenches such as gravity, we analyze the set of wrenches that the suction cup can apply.

\subsubsection{Wrench Resistance}
The object wrench set for a grasp using a contact model with $m$ basis wrenches is $\Lambda = \{\bw \in \bR^6 \mid \bw =  G \alpha \text{ for some } \alpha \in \mF\}$, where $G \in \bR^{6 \times m}$ is a set of basis wrenches in the object coordinate frame, and $\mF \subseteq \bR^m$ is a set of constraints on contact wrench magnitudes~\cite{ murray1994mathematical}.
\begin{definition}
A grasp $\bu$ achieves {\it wrench resistance} with respect to $\bw$ if $-\bw \in \Lambda$~\cite{krug2017grasp, murray1994mathematical}.
\end{definition}
\noindent We encode wrench resistance as a binary variable $W$ such that $W=0$ if $\bu$ resists $\bw$ and $W=0$ otherwise.

\subsubsection{Suction Contact Model}
Many suction contact models acknowledge normal forces, vacuum forces, tangential friction, and torsional friction~\cite{bahr1996design, kolluru1998modeling, mantriota2007theoretical, stuart2015suction} similar to a point contact with friction or soft finger model~\cite{murray1994mathematical}.
However, under this model, a single suction cup cannot resist torques about axes in the contact tangent plane, implying that any torque about such axes would cause the suction cup to drop an object (see the supplementary material for a detailed proof).
This defies our intuition since empirical evidence suggests that a single point of suction can robustly transport objects~\cite{eppner2016lessons, hernandez2016team}.

We hypothesize that these torques are resisted through an asymmetric pressure distribution on the ring of contact between the suction cup and object, which occurs due to passive elastic restoring forces in the material.
\figref{model} illustrates the suction ring contact model.
The grasp map $G$ is defined by the following basis wrenches:
\begin{enumerate}
	\item {\bf Actuated Normal Force ($f_z$):} The force that the suction cup material applies by pressing into the object along the contact $z$ axis.
	\item {\bf Vacuum Force ($V$):} The magnitude of the constant force pulling the object into the suction cup coming from the air pressure differential.
	\item {\bf Frictional Force ($f_f = (f_x, f_y)$):} The force in the contact tangent plane due to the normal force between the suction cup and object, $f_N = f_z + V$. 
	\item {\bf Torsional Friction ($\tau_z$):} The torque resulting from frictional forces in the ring of contact.
	\item {\bf Elastic Restoring Torque ($\tau_e = (\tau_x, \tau_y)$):} The torque about axes in the contact tangent plane resulting from elastic restoring forces in the suction cup pushing on the object along the boundary of the contact ring.
\end{enumerate}

The magnitudes of the contact wrenches are constrained due to (a) the friction limit surface~\cite{kao2008contact}, (b) limits on the elastic behavior of the suction cup material, and (c) limits on the vacuum force.
In the suction ring contact model, $\mF$ is approximated by a set of linear constraints for efficient computation of wrench resistance:
\begin{align*}
	\text{{\bf Friction:}} && \sqrt{3} | f_x | &\leq \mu f_N & \sqrt{3} | f_y | &\leq \mu f_N & \sqrt{3} | \tau_z| &\leq r \mu f_N  \\
	\text{{\bf Material:}} && \sqrt{2}| \tau_x | &\leq \pi  r \kappa & \sqrt{2} | \tau_y | &\leq \pi  r \kappa \\
	\text{{\bf Suction:}} &&  f_z &\geq -V
\end{align*}
\noindent Here $\mu$ is the friction coefficient, $r$ is the radius of the contact ring, and $\kappa$ is a material-dependent constant modeling the maximum stress for which the suction cup has linear-elastic behavior.
These constraints define a subset of the friction limit ellipsoid and cone of admissible elastic torques under a linear pressure distribution about the ring of the cup.
Furthermore, we can compute wrench resistance using quadratic programming due to the linearity of the constraints.
See the supplemental file for a detailed derivation and proof.

\subsection{Robust Wrench Resistance}
\seclabel{robust-wr}
We evaluate the robustness of candidate suction grasps by evaluating seal formation and wrench resistance over distributions on object pose, grasp pose, and disturbing wrenches:
\begin{definition}
The {\it robust wrench resistance} metric for $\bu$ and $\bx$ is $\lambda(\bu, \bx) = \mathbb{P}(W \mid \bu, \bx)$, the probability of success under perturbations in object pose, gripper pose, friction, and disturbing wrenches.
\end{definition}
In practice, we evaluate robust wrench resistance by taking $J$ samples, evaluating binary wrench resistance for each, and computing the sample mean: $\frac{1}{J} \sum_{j=1}^J W_j$.

\section{Dex-Net 3.0 Dataset}
\seclabel{dataset}

\begin{figure}[t!]
\centering
\includegraphics[scale=0.22]{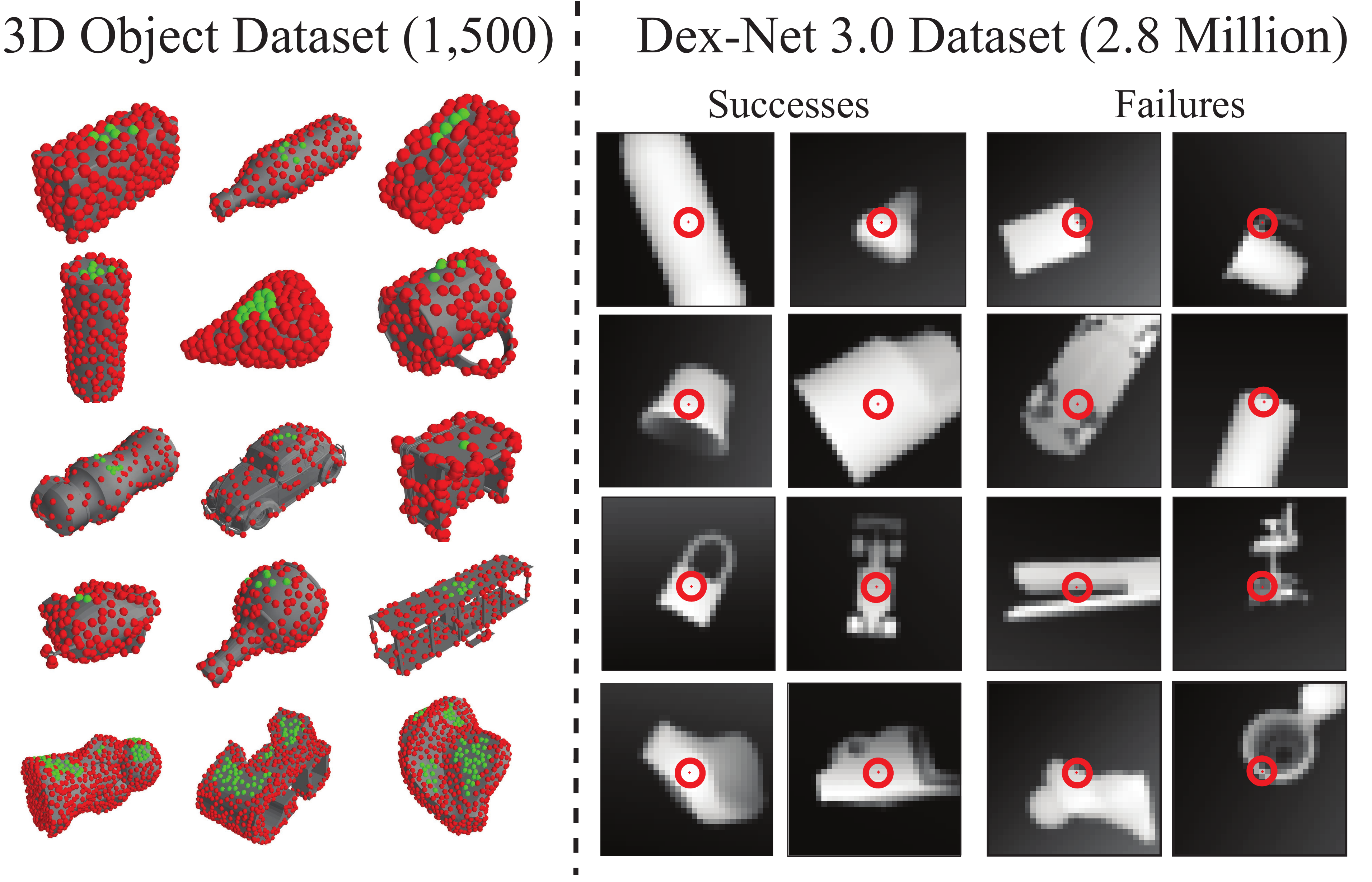}
\caption{The Dex-Net 3.0 dataset. (Left) The Dex-Net 3.0 object dataset contains approximately 350k unique suction target points across the surfaces of 1,500 3D models from the KIT object database~\cite{kasper2012kit} and 3DNet~\cite{wohlkinger20123dnet}. Each suction grasp is classified as robust (green) or non-robust (red). Robust grasps are often above the object center-of-mass on flat surfaces of the object. (Right) The Dex-Net 3.0 point cloud dataset contains 2.8 million tuples of point clouds and suction grasps with robustness labels, with approximately 11.8$\%$ positive examples.}
\figlabel{dataset}
\end{figure}

\begin{figure*}[t!]
\centering
\includegraphics[scale=0.22]{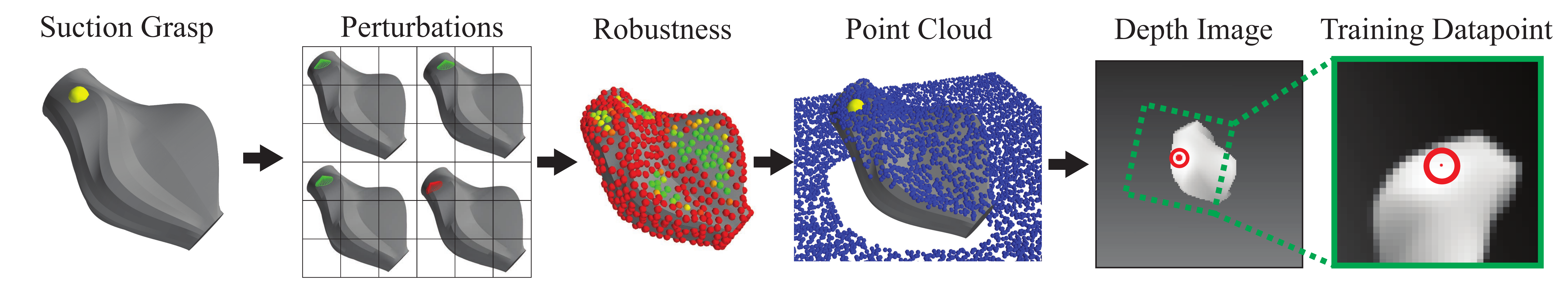}
\caption{Pipeline for generating the Dex-Net 3.0 dataset (left to right).
We first sample a candidate suction grasp from the object surface and evaluate the ability to form a seal and resist gravity over perturbations in object pose, gripper pose, and friction.
The samples are used to estimate the probability of success, or robustness, for candidate grasps on the object surface.
We render a point cloud for each object and associate the candidate grasp with a pixel and orientation in the depth image through perspective projection.
Training datapoints are centered on the suction target pixel and rotated to align with the approach axis to encode the invariance of the robustness to image locations.}
\figlabel{pipeline}
\end{figure*}

To learn to predict grasp robustness based on noisy point clouds, we generate the Dex-Net 3.0 training dataset of point clouds, grasps, and grasp reward labels by sampling tuples $(R_i, \bu_i, \by_i)$ from a joint distribution $p(R, \bx, \by, \bu)$ modeled as the product of distributions on:
\begin{itemize}
	\item {\bf States: $p(\bx)$:} A uniform distribution over a discrete dataset of objects and their stable poses and uniform continuous distributions over the object planar pose and camera poses in a bounded region of the workspace.
	\item {\bf Grasp Candidates: $p(\bu | \bx)$:} A uniform random distribution over contact points on the object surface.
	\item {\bf Grasp Rewards $p(R \mid \bu, \bx)$:} A stochastic model of wrench resistance for the gravity wrench that is sampled by perturbing the gripper pose according to a Gaussian distribution and evaluating the contact model described in \secref{contact-model}.
	\item {\bf Observations $p(\by \mid \bx)$:} A depth sensor noise model with multiplicative and Gaussian process pixel noise.
\end{itemize}
\noindent \figref{dataset} illustrates a subset of the Dex-Net 3.0 object and grasp dataset.
The parameters of the sampling distributions and compliant suction contact model $(n, r, h, E, V, \mu, \kappa, \epsilon)$ (see \secref{contact-model}) were set by maximizing average precision of the $Q$ values using grid search for a set of grasps attempted on an ABB YuMi robot on a set of known 3D printed objects (see \secref{object-datasets}).

Our pipeline for generating training tuples is illustrated in \figref{pipeline}.
We first sample state by selecting an object at random from a database of 3D CAD models and sampling a friction coefficient, planar object pose, and camera pose relative to the worksurface.
We generate a set of grasp candidates for the object by sampling points and normals uniformly at random from the surface of the object mesh.
We then set the binary reward label $R=1$ if a seal is formed and robust wrench resistance (described in \secref{robust-wr}) is above a threshold value $\epsilon$.
Finally, we sample a point cloud of the scene using rendering and a model of image noise~\cite{mahler2016dexnet}.
The grasp success labels are associated with pixel locations in images through perspective projection~\cite{hartley2003multiple}.
A graphical model for the sampling process and additional details on the distributions can be found in the supplemental file.

\section{Learning a Deep Robust Grasping Policy}
\seclabel{gqcnn}

We use the Dex-Net 3.0 dataset to train a GQ-CNN that takes as input a single-view point cloud of an object resting on the table and a candidate suction grasp defined by a target 3D point and approach direction, and outputs the robustness, or estimated probability of success, for the grasp on the visible object.

Our GQ-CNN architecture is identical to Dex-Net 2.0~\cite{mahler2017dex} except that we modify the pose input stream to include the angle between the approach direction and the table normal.
The point cloud stream takes a depth image centered on the target point and rotated to align the middle column of pixels with the approach orientation similar to a spatial transforming layer~\cite{jaderberg2015spatial}.
%The transformed point cloud is convolved with four layers of learned convolutional filters motivated by the correlation of surface planarity with suction success~\cite{domae2014fast, hernandez2016team}.
The end-effector depth from the camera and orientation are input to a fully connected layer in a separate pose stream and concatenated with conv features in a fully connected layer.
We train the GQ-CNN with using stochastic gradient descent with momentum using an 80-20 training-to-validation image-wise split of the Dex-Net 3.0 dataset.
Training took approximately 12 hours on three NVIDIA Titan X GPUs.
%We use the weight initialization, data augmentation, and point cloud noise sampling strategies from the parallel-jaw GQ-CNN of Dex-Net 2.0~\cite{mahler2017dex}. 
The learned GQ-CNN achieves 93.5$\%$ classification accuracy on the held-out validation set.

We use the GQ-CNN in a deep robust grasping policy to plan suction target grasps from point clouds on a physical robot.
The policy uses the Cross Entropy Method (CEM)~\cite{levine2016learning, mahler2017dex, rubinstein2013fast}.
CEM samples a set of initial candidate grasps uniformly at random from the set of surface points and inward-facing normals on a point cloud of the object, then iteratively resamples grasps from a Gaussian Mixture Model fit to the grasps with the highest predicted probability of success.
See the supplemental file for example grasps planned by the policy.
%\figref{policy-examples} illustrates the probability of success predicted by the GQ-CNN on candidates grasps from several real point clouds.

%\begin{figure}[t!]
%\centering
%\includegraphics[scale=0.30]{figures/illustrations/robustness_maps-01.eps}
%\caption{Robust grasps planned with the Dex-Net 3.0 GQ-CNN-based policy on example RGB-D point clouds. \textbf{(Left)} The robot is presented an object in isolation. \textbf{(Middle)} Initial candidate suction target points colored by the predicted probability of success from zero (red) to one (green). Robust grasps tend to concentrate around the object centroid. \textbf{(Right)} The policy optimizes for the grasp with the highest probability of success using the Cross Entropy Method.
%\vspace*{-20pt}}
%\figlabel{policy-examples}
%\end{figure}

\section{Experiments}
\seclabel{experiments}

\begin{figure*}[t!]
\centering
\includegraphics[scale=0.3]{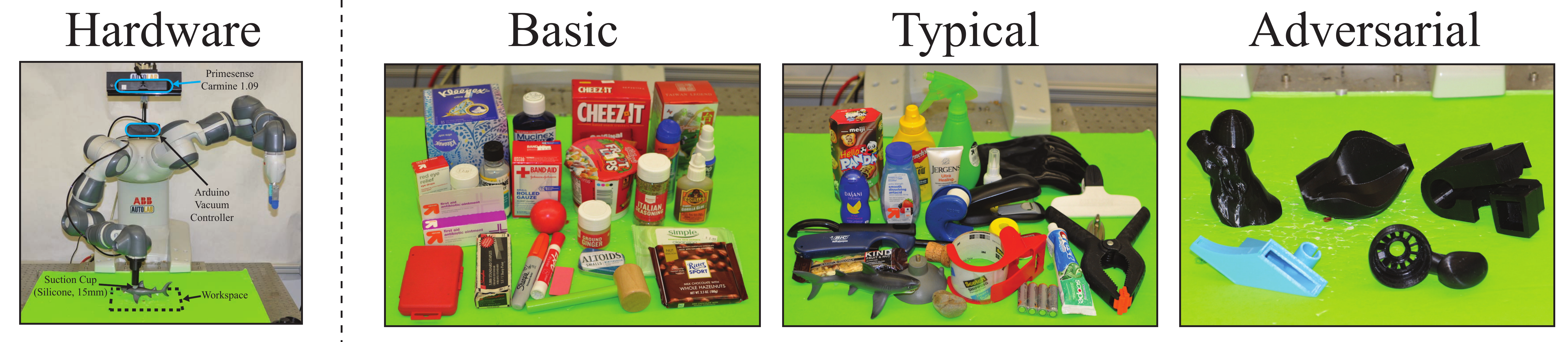}
\caption{\textbf{(Left)} The experimental setup with an ABB YuMi equipped with a suction gripper. \textbf{(Right)} The 55 objects used to evaluate suction grasping performance. The objects are divided into three categories to characterize performance: Basic (e.g. prismatic objects), Typical, and Adversarial.}
\figlabel{object-datasets}
\end{figure*}

We ran experiments to characterize the precision of robust wrench resistance when object shape and pose are known and the precision of our deep robust grasping policy for planning grasps from point clouds for three object classes.
%\begin{enumerate}
%    \item The correlation between suction robustness predicted by our model and real-world successes when object shape and pose are known.
%    \item The performance of a GQ-CNN-based policy for planning robust grasps from point clouds over a variety of objects with unknown shapes and poses.
%\end{enumerate}

\subsection{Object Classes}
\seclabel{object-datasets}
We created a dataset of 55 rigid and non-porous objects including tools, groceries, office supplies, toys, and 3D printed industrial parts.
We separated objects into three categories, illustrated in \figref{object-datasets}:
\begin{enumerate}
    \item {\it Basic:} Prismatic solids (e.g. rectangular prisms, cylinders). Includes 25 objects.
    \item {\it Typical:} Common objects with varied geometry and many accessible, approximately planar surfaces. Includes 25 objects.
    \item {\it Adversarial:} 3D-printed objects with complex geometry (e.g. curved or narrow surfaces) that are difficult to access. Includes 5 objects.
\end{enumerate}
%The 3D-printed adversarial objects allow us to control for object shape, pose, and mass.  
For object details, see http://bit.ly/2xMcx3x.

\subsection{Experimental Protocol}
We ran experiments with an ABB YuMi with a Primesense Carmine 1.09 and a suction system with a 15$mm$ diameter silicone single-bellow suction cup and a VM5-NC VacMotion vacuum generator with a payload of approximately 0.9$kg$.
The experimental workspace is illustrated in the left panel of \figref{object-datasets}.
In each experiment, the operator iteratively presented a target object to the robot and the robot planned and executed a suction grasp on the object.
The operator labeled successes based on whether or not the robot was able to lift and transport the object to the side of the workspace.
For each method, we measured:
\begin{enumerate}
	\item {\bf Average Precision (AP).} The area under the precision-recall curve, which measures precision over possible thresholds on the probability of success predicted by the policy. This is useful for industrial applications where a robot may take an alternative action (e.g. asking for help) if the planned grasp is predicted to fail.
	\item {\bf Success Rate.} The fraction of all grasps that were successful.
\end{enumerate}
All experiments ran on a Desktop running Ubuntu 14.04 with a 2.7 GHz Intel Core i5-6400 Quad-Core CPU and an NVIDIA GeForce 980 GPU.

%\subsection{Average Precision Metric}
%The success rate measures the fraction of grasps that successfully lift and transport the target object and is commonly used to assess performance.
%In practice, however, grasping policies would be part of a larger system that can decide whether to execute a planned grasp or take a different action (e.g. asking a human for help, poking objects to make other grasps available) by thresholding the predicted probability of success.
%Given a confidence threshold for grasp success, we use two performance metrics:
%\begin{enumerate}
%	\item {\bf Precision.} The fraction of grasps predicted to succeed that succeed when executed.
%	\item {\bf High Confidence Rate.} The fraction of all grasps that are predicted to succeed.
%\end{enumerate}
%There is typically a tradeoff between these metrics: a higher confidence threshold will reduce false positives but will also reduce the frequency of grasp attempts, increasing runtime and decreasing the diversity of cases that the robot is able to successfully handle.
%To assess this tradeoff we measure the Average Precision (AP), or the mean precision value over all possible confidence thresholds from 0 to 1.

\subsection{Performance on Known Objects}
\seclabel{known}
To assess performance of our robustness metric independently from the perception system, we evaluated whether or not the metric was predictive of suction grasp success when object shape and pose were known using the 3D printed Adversarial objects (right panel of \figref{object-datasets}).
The robot was presented one of the five Adversarial objects in a known stable pose, selected from the top three most probable stable poses.
We hand-aligned the object to a template image generated by rendering the object in a known pose on the table.
Then, we indexed a database of grasps precomputed on 3D models of the objects and executed the grasp with the highest metric value for five trials.
In total, there were 75 trials per experiment.

We compared the following metrics:
\begin{enumerate}
\item {\bf Planarity-Centroid (PC3D).} The inverse distance to the object centroid for sufficiently planar patches on the 3D object surface.
\item {\bf Spring Stretch (SS).} The maximum stretch among virtual springs in the suction contact model.
\item {\bf Wrench Resistance (WR).} Our model without perturbations.
\item {\bf Robust Wrench Resistance (RWR).} Our model.
\end{enumerate}
\noindent The RWR metric performed best with 99$\%$ AP compared to 93$\%$ AP for WR, 89$\%$ AP for SS, and 88$\%$ for PC3D.
%The results are detailed in \tabref{correlation}.
%A policy based on robust wrench resistance metric achieved nearly 100$\%$ average precision and 80$\%$ success on this dataset, suggesting that the ranking of grasps by robust wrench resistance is correlated with the ranking by physical successes.

%\begin{table}[t]
%\centering
%\resizebox{\textwidth}{!}{%
%        \begin{tabular}{r || R{1.0cm} | R{1.0cm}}
%       \multicolumn{1}{c ||}{\specialcell{{\bf Metric}}}&  \multicolumn{1}{c |}{\specialcell{\bf AP ($\%$)}} & \multicolumn{1}{c}{\specialcell{\bf Success Rate ($\%$)}} \\
%       \hline & & \\
%       PC & 88 &  80  \\
%       SS & 89 &  84 \\
%       WR & 93 & 80  \\
%       RWR &{\bf 100} & {\bf 92}
%        \end{tabular}
%        }
%		\caption{Performance of robust grasping policies with known state (3D object shape and pose) across 75 physical trials per policy on the Adversarial object dataset. The policies differ by the metric used to rank grasps, and each metric is computed using the known 3D object geometry. The robust wrench resistance metric, which considers the ability of a suction cup to form a seal and resist gravity under perturbations, has very high precision. In comparison, the Planarity-Centroid heuristic achieves only 88$\%$ precision and 80$\%$ success.}
%		\tablabel{correlation}
%\end{table}

\subsection{Performance on Novel Objects}
\seclabel{novel}
We also evaluated the performance of GQ-CNNs trained on Dex-Net 3.0 for planning suction target points from a single-view point cloud.
In each experiment, the robot was presented one object from either the Basic, Typical, or Adversarial classes in a pose randomized by shaking the object in a box and placing it on the table.
The object was imaged with a depth sensor and segmented using 3D bounds on the workspace.
Then, the grasping policy executed the most robust grasp according to a success metric.
In this experiment the human operators were blinded from the method they were evaluating to remove bias in human labels.

We compared policies that optimized the following metrics:
\begin{enumerate}
\item {\bf Planarity.} The inverse sum of squared errors from an approach plane for points within a disc with radius equal to that of the suction cup.
\item {\bf Centroid.} The inverse distance to the object centroid.
\item {\bf Planarity-Centroid (PC).} The inverse distance to the centroid for planar patches on the 3D object surface.
\item {\bf GQ-CNN (ADV).} Our GQ-CNN trained on synthetic data from the Adversarial objects (to assess the ability of the model to fit complex objects).
\item {\bf GQ-CNN (DN3).} Our GQ-CNN trained on synthetic data from 3DNet~\cite{wohlkinger20123dnet}, KIT~\cite{kasper2012kit}, and the Adversarial objects.
\end{enumerate} 

\tabref{policy-results} details performance on the Basic, Typical, and Adversarial objects.
On the Basic and Typical objects, we see that the Dex-Net 3.0 policy is comparable to PC in terms of success rate and has near-perfect AP, suggesting that failed grasps often have low robustness and can therefore be detected.
On the adversarial objects, GQ-CNN (ADV) significantly outperforms GQ-CNN (DN3) and PC, suggesting that this method can be used to successfully grasp objects with complex surface geometry as long as the training dataset closely matches the objects seen at runtime.
The DN3 policy took an average of 3.0 seconds per grasp.

\begin{table*}[t]
\centering
\resizebox{\textwidth}{!}{%
        \begin{tabular}{r || R{1.0cm} | R{1.0cm} || R{1.0cm} | R{1.0cm} || R{1.0cm} | R{1.0cm}}
       \multicolumn{1}{c ||}{\specialcell{}}& \multicolumn{2}{c ||}{\specialcell{\bf Basic}}& \multicolumn{2}{c ||}{\specialcell{\bf Typical}}& \multicolumn{2}{c}{\specialcell{\bf Adversarial}}  \\
       \hline & & & & & & \\
       \multicolumn{1}{c ||}{\specialcell{}}&  \multicolumn{1}{c |}{\specialcell{\bf AP ($\%$)}} & \multicolumn{1}{c ||}{\specialcell{\bf Success Rate ($\%$)}}&  \multicolumn{1}{c |}{\specialcell{\bf AP ($\%$)}} & \multicolumn{1}{c ||}{\specialcell{\bf Success Rate ($\%$)}}&  \multicolumn{1}{c |}{\specialcell{\bf AP ($\%$)}} & \multicolumn{1}{c}{\specialcell{\bf Success Rate ($\%$)}} \\
       \hline & & & & & & \\
       {\bf Planarity} & 81 & 74 & 69 & 67 & 48 & 47 \\
       {\bf Centroid} & 89 & 92 & 80 & 78 & 47 & 38 \\
       {\bf Planarity-Centroid} & 98 &  94 & 94 & {\bf 86} & 64 & 62 \\
       {\bf GQ-CNN (ADV)} & 83 & 77 & 75 & 67 & {\bf 86} & {\bf 81} \\
       {\bf GQ-CNN (DN3)} &{\bf 99} & {\bf 98} & {\bf 97} & 82 & 61 & 58 
        \end{tabular}
        }
        \caption{Performance of point-cloud-based grasping policies for 125 trials each on the Basic and Typical datasets and 100 trials each on the Adversarial dataset. We see that the GQ-CNN trained on Dex-Net 3.0 has the highest Average Precision (AP) (area under the precision-recall curve) on the Basic and Typical objects, suggesting that the robustness score from the GQ-CNN could be used to anticipate grasp failures and select alternative actions (e.g. probing objects) in the context of a larger system. Also, a GQ-CNN trained on the Adversarial dataset outperforms all methods on the Adversarial objects, suggesting that the performance of our model is improved when the true object models are used for training. }
		\tablabel{policy-results}
%\vspace*{-20pt}
\end{table*}

%\begin{figure*}[t!]
%\centering
%\includegraphics[scale=0.6]{figures/illustrations/hypothesis2-01.eps}
%\caption{Precision versus percent of grasps attempted for all confidence thresholds on the predicted probability of grasp success for 125 trials on the Basic, Typical, and Adversarial object datasets. The GQ-CNN trained on Dex-Net 3.0 has near 100$\%$ precision on the Basic and Typical datasets for a signficant portion of attempts, suggesting that the GQ-CNN is able to predict when it is likely to fail on novel objects. The GQ-CNN trained on the Adversarial objects has a significantly higher precision on the Adversarial dataset, butdoes not perform as well on the other objects. \TODO{Fix DEXNET in caption}
%\vspace*{-20pt}}
%\figlabel{prec-conf-gen}
%\end{figure*}

\subsection{Failure Modes}
The most common failure mode was attempting to form a seal on surfaces with surface geometry that prevent seal formation.
This is partially due to the limited resolution of the depth sensor, as our seal formation model is able to detect the inability to form a seal on such surfaces when the geometry is known precisely.
In contrast, the planarity-centroid metric performs poorly on objects with non-planar surfaces near the object centroid.

\section{Future Work}
\seclabel{discussion}
In future work we will study sensitivity to (1) the distribution of 3D object models using in the training dataset, (2) noise and resolution in the depth sensor, and (3) variations in vacuum suction hardware (e.g. cup shape, hardness of cup material).
We will also extend this model to learning suction grasping policies for bin-picking with heaps of parts and to composite policies that combine suction grasping with parallel-jaw grasping by a two-armed robot.
We are also working with colleagues in the robot grasping community to propose shareable benchmarks and protocols that specify experimental objects and conditions with industry-relevant metrics such as Mean Picks Per Hour (MPPH), see http://goo.gl/6M5rfw.

%In future work we plan to study the sensitivity of the method to (1) the distribution of 3D object models using in the training dataset, (2) depth sensor noise and resolution, and (3) different suction cup hardware such as non-circular cups, cups with larger diameters, and cups with harder or softer material.
%To further quantify performance we are also working to establish benchmarks and protocols that can be used to compare the mean picks per hour (MPPH) of grasp planning methods across different lab hardware setups.
%Finally, we are also interested in using the suction grasping policy as one of multiple primitives for bin picking, where a high-level policy decides between using parallel jaws or suction based on the predicted probability of success.

\appendices
\section{Additional Experiments}
\seclabel{additional-experiments}
To better characterize the correlation of our robust wrench resistance metric, compliant suction contact model, and GQ-CNN-based policy for planning suction target grasps from point clouds with physical outcomes on a real robot, we present several additional analyses and experiments.
\subsection{Performance Metrics}
Our primary numeric metrics of performance were:
\begin{enumerate}
	\item {\bf Average Precision (AP).} The area under the precision-recall curve, which measures precision over possible thresholds on the probability of success predicted by the policy. This is useful for industrial applications where a robot may take an alternative action (e.g. probing, asking for help) if the planned grasp is predicted to fail.
	\item {\bf Success Rate.} The fraction of all grasps that were successful.
\end{enumerate}

We argue that these metrics alone do not give a complete picture of how useful a suction grasp policy would work in practice.
Average Precision (AP) penalizes a policy for having poor recall (a high rate of false negatives relative to true positives), and success rate penalizes a policy with a high number of failures.
However, not all failures should be treated equally: some failures are predicted to occur by the GQ-CNN (low predicted probability of success) while the others are the result of an overconfident prediction.

In practice, a suction grasp policy would be part of a larger system (e.g. a state machine) that could decide whether or not to execute a grasp based on the continuous probability of success output by the GQ-CNN.
As long as the policy is not overconfident, such as system can detect failures before they occur and take an alternative action such as attempting to turn the object over, asking a human for help, or leaving the object in the bin for error handling.
At the same time, if a policy is too conservative and never predicts successes, then the system will handle be able to handle very few test cases.

We illustrate this tradeoff by plotting the Success-Attempt Rate curve which plots:
\begin{enumerate}
	\item {\bf Success Rate.} The fraction of fraction of grasps that are successful if the system only executes grasps have predicted probability of success greater than a confidence threshold $\tau$.
	\item {\bf Attempt Rate.} The fraction of all test cases for which the system attempts a grasp, if the system only attempts grasps with predicted probability of success greater than a confidence threshold $\tau$.
\end{enumerate}
\noindent over all possible values of the confidence threshold $\tau$.
There is typically an inverse relationship between the two metrics: a higher confidence threshold will reduce false positives but will also reduce the frequency of grasp attempts, increasing runtime and decreasing the diversity of cases that the robot is able to successfully handle.

\subsection{Performance on Known Objects}
\seclabel{additional-known-objects}
To assess performance of our robustness metric independent of the perception system, we evaluated whether or not the metric was predictive of suction grasp success when object shape and pose were known using the 3D printed Adversarial objects.
The robot was presented one of the five Adversarial objects in a known stable pose, selected from the top three most probable stable poses. We hand-aligned the object to a template image generated by rendering the object in a known pose on the table.
Then, we indexed a database of grasps precomputed on 3D models of the objects and executed the grasp with the highest metric value for five trials.
In total, there were 75 trials per experiment.

We compared the following metrics:
\begin{enumerate}
\item {\bf Planarity-Centroid (PC3D).} The inverse distance to the object centroid for sufficiently planar patches on the 3D object surface.
\item {\bf Spring Stretch (SS).} The maximum stretch among virtual springs in the suction contact model.
\item {\bf Wrench Resistance (WR).}
\item {\bf Robust Wrench Resistance (RWR).}
\end{enumerate}

The results are detailed in \tabref{correlation} and the Success vs Attempt Rate curve is plotted in \figref{hyp1}.
A policy based on the robust wrench resistance metric achieved nearly 100$\%$ average precision and 92$\%$ success on this dataset, suggesting that the ranking of grasps by robust wrench resistance is correlated with the ranking by physical successes.

\begin{table}[t]
\centering
\resizebox{\textwidth}{!}{%
        \begin{tabular}{r || R{1.0cm} | R{1.0cm}}
       \multicolumn{1}{c ||}{\specialcell{{\bf Metric}}}&  \multicolumn{1}{c |}{\specialcell{\bf AP ($\%$)}} & \multicolumn{1}{c}{\specialcell{\bf Success Rate ($\%$)}} \\
       \hline & & \\
       PC3D & 88 &  80  \\
       SS & 89 &  84 \\
       WR & 93 & 80  \\
       RWR &{\bf 100} & {\bf 92}
        \end{tabular}
        }
		\caption{Performance of robust grasping policies with known state (3D object shape and pose) across 75 physical trials per policy on the Adversarial object dataset. The policies differ by the metric used to rank grasps, and each metric is computed using the known 3D object geometry. The robust wrench resistance metric, which considers the ability of a suction cup to form a seal and resist gravity under perturbations, has very high precision. In comparison, the Planarity-Centroid heuristic achieves only 88$\%$ precision and 80$\%$ success.}
		\tablabel{correlation}
\end{table}

\begin{figure}[t!]
\centering
\includegraphics[scale=0.7]{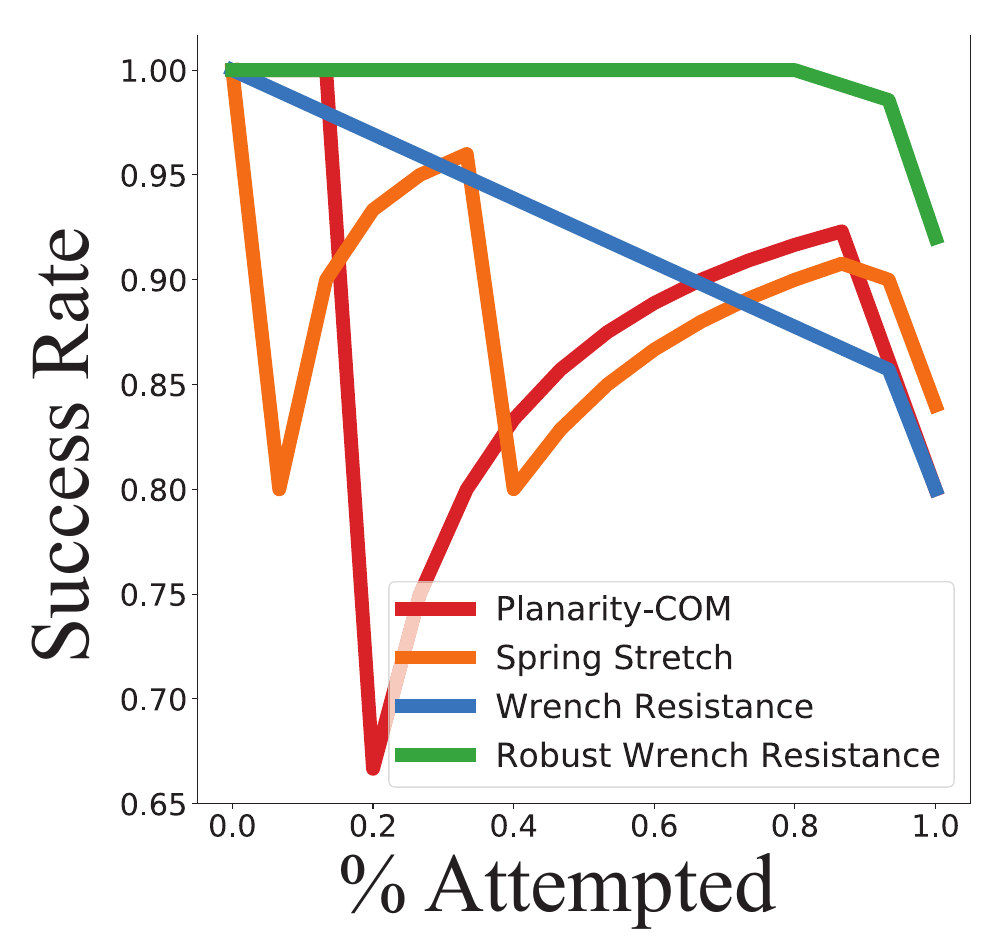}
\caption{Success rate vs Attempt Rate for grasp quality metrics on known 3D objects in known poses. The data was collected across 75 trials per policy on the Adversarial object dataset. The robust wrench resistance metric based on our compliant suction contact model had a 100$\%$ success rate for a large percentage of possible test cases, whereas a heuristic based on planarity and the distance to the object center of mass had success rates as low as $67\%$, indicating that the real-valued distance to the center of mass is not well correlated with grasp success.}
\figlabel{hyp1}
\end{figure}

\subsection{Performance on Novel Objects}

We also evaluated the performance of GQ-CNNs trained on Dex-Net 3.0 for planning suction target points from a single-view point cloud.
In each experiment, the robot was presented one object from either the Basic, Typical, or Adversarial classes in a pose randomized by shaking the object in a box and placing it on the table.
The object was imaged with a depth sensor and segmented using 3D bounds on the workspace.
Grasp candidates were then sampled from the depth image and the grasping policy executed the most robust candidate grasp according to a success metric.
In this experiment the human operators were blinded from the method they were evaluating to remove bias in human labels.

We compared policies that used the following metrics:
\begin{enumerate}
\item {\bf Planarity.} The inverse sum of squared errors from an approach plane for points within a disc with radius equal to that of the suction cup.
\item {\bf Centroid.} The inverse distance to the object centroid.
\item {\bf Planarity-Centroid.} The inverse distance to the centroid for sufficiently planar patches on the 3D object surface.
\item {\bf GQ-CNN (ADV).} A GQ-CNN trained on synthetic data from only the Adversarial objects (to assess the ability of the model to fit complex objects).
\item {\bf GQ-CNN (DN3).} A GQ-CNN trained on synthetic data from the 3DNet~\cite{wohlkinger20123dnet}, KIT~\cite{kasper2012kit}, and Adversarial object datasets.
\end{enumerate} 

\tabref{policy-results} details performance on the Basic, Typical, and Adversarial objects, and \figref{hyp2} illustrates the Success-Attempt Rate tradeoff.
We see that the Dex-Net 3.0 policy has the highest AP across the Basic and Typical classes.
Also, the GQ-CNN trained on the Adversarial objects significantly outperforms all methods on the Adversarial dataset, suggesting that our model is able to exploit knowledge of complex 3D geometry to plan robust grasps.
Furthermore, the Success-Attempt Rate curve suggests that the continuous probability of success output by the Dex-Net 3.0 policy is highly correlated with the true success label and can be used to detect failures before they occur on the Basic and Typical object classes.
The Dex-Net 3.0 policy took an average of approximately 3 seconds to plan each grasp.

\begin{table*}[t]
\centering
\resizebox{\textwidth}{!}{%
        \begin{tabular}{r || R{1.0cm} | R{1.0cm} || R{1.0cm} | R{1.0cm} || R{1.0cm} | R{1.0cm}}
       \multicolumn{1}{c ||}{\specialcell{}}& \multicolumn{2}{c ||}{\specialcell{\bf Basic}}& \multicolumn{2}{c ||}{\specialcell{\bf Typical}}& \multicolumn{2}{c}{\specialcell{\bf Adversarial}}  \\
       \hline & & & & & & \\
       \multicolumn{1}{c ||}{\specialcell{}}&  \multicolumn{1}{c |}{\specialcell{\bf AP ($\%$)}} & \multicolumn{1}{c ||}{\specialcell{\bf Success Rate ($\%$)}}&  \multicolumn{1}{c |}{\specialcell{\bf AP ($\%$)}} & \multicolumn{1}{c ||}{\specialcell{\bf Success Rate ($\%$)}}&  \multicolumn{1}{c |}{\specialcell{\bf AP ($\%$)}} & \multicolumn{1}{c}{\specialcell{\bf Success Rate ($\%$)}} \\
       \hline & & & & & & \\
       {\bf Planarity} & 81 & 74 & 69 & 67 & 48 & 47 \\
       {\bf Centroid} & 89 & 92 & 80 & 78 & 47 & 38 \\
       {\bf Planarity-Centroid} & 98 &  94 & 94 & {\bf 86} & 64 & 62 \\
       {\bf GQ-CNN (ADV)} & 83 & 77 & 75 & 67 & {\bf 86} & {\bf 81} \\
       {\bf GQ-CNN (DN3)} &{\bf 99} & {\bf 98} & {\bf 97} & 82 & 61 & 58 
        \end{tabular}
        }
        \caption{Performance of image-based grasping policies for 125 trials each on the Basic and Typical datasets and 100 trials each on the Adversarial datasets. We see that the GQ-CNN trained on Dex-Net 3.0 has the highest average precision on the Basic and Typical objects but has lower precision on the adversarial objects, which are very different than common objects in the training dataset. A GQ-CNN trained on the Adversarial dataset significantly outperforms all methods on these objects, suggesting that our model is able to capture complex geometries when the training dataset contains a large proportion of such objects. }
		\tablabel{policy-results}
%\vspace*{-20pt}
\end{table*}

\begin{figure*}[t!]
\centering
\includegraphics[scale=0.6]{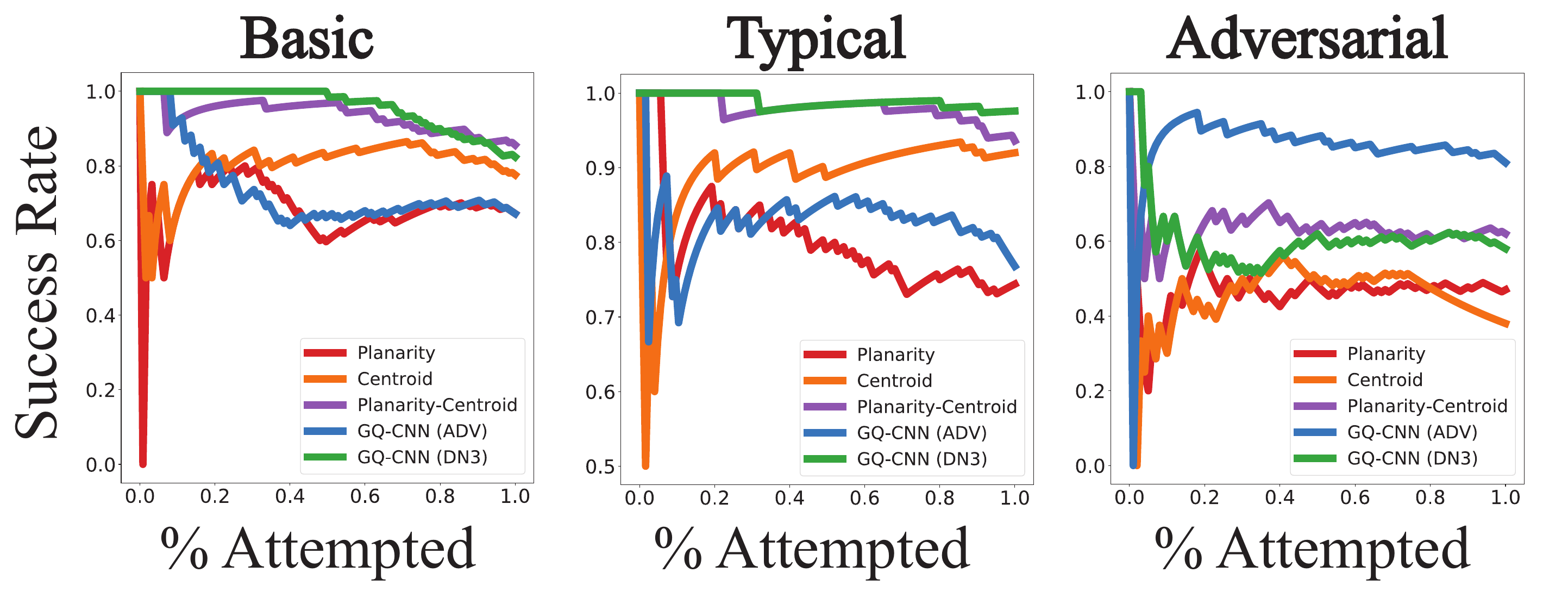}
\caption{Success vs Attempt Rate for 125 trials on each of the Basic and Typical object datasets and 100 trials each on the Adversarial object dataset. The GQ-CNN trained on Dex-Net 3.0 has near 100$\%$ precision on the Basic and Typical classes for a significant portion of attempts, suggesting that the GQ-CNN is able to predict when it is likely to fail on novel objects. The GQ-CNN trained on the Adversarial objects has a significantly higher precision on the Adversarial class but does not perform as well on the other objects.}
\figlabel{hyp2}
\end{figure*}

\subsection{Classification Performance on Known Objects}
To assess performance of our robustness metric on classifying grasps as successful or unsuccessful, we evaluated whether or not the metric was able to classify a set of grasps sampled randomly from the 3D object surface using the known 3D object geometry and pose of the Adversarial objects.
First, we sampled a set of $1000$ grasps uniformly at random from the surface of the 3D object meshes.
Then robot was presented one of the five Adversarial objects in a known stable pose, selected from the top three most probable stable poses.
We hand-aligned the object to a template image generated by rendering the object in a known pose on the table.
Then, the robot executed a grasp uniformly at random from the set of reachable grasps for the given stable pose.
In total, there were $600$ trials, $125$ per object.

We compared the predictions made for those grasps by the following metrics:
\begin{enumerate}
\item {\bf Planarity-Centroid (PC3D).} The inverse distance to the object centroid for sufficiently planar patches on the 3D object surface.
\item {\bf Spring Stretch (SS).} The maximum stretch among virtual springs in the suction contact model.
\item {\bf Wrench Resistance (WR).}
\item {\bf Robust Wrench Resistance (RWR).}
\end{enumerate}

We measured the Average Precision (AP), classification accuracy, and Spearman's rank correlation coefficient (which measures the correlation between the ranking of the metric value and successes on the physical robot).
\tabref{random-results} details the performance of each metric and the Precision-Recall curve is plotted in \figref{hyp1-pr}.
We see that the robust wrench resistance metric with our compliant spring contact model has the highest average precision and correlation with successes on the physical robot.

\begin{table}[t]
\centering
\resizebox{\textwidth}{!}{%
        \begin{tabular}{r || R{1.0cm} | R{1.0cm} | R{1.0cm}}
       \multicolumn{1}{c ||}{\specialcell{{\bf Metric}}}&  \multicolumn{1}{c |}{\specialcell{\bf AP ($\%$)}} & \multicolumn{1}{c}{\specialcell{\bf Accuracy ($\%$)}} & \multicolumn{1}{c}{\specialcell{\bf Rank Correlation}}\\
       \hline & & & \\
       PC3D & 71 &  68 & 0.36  \\
       SS & 75 &  74 & 0.49 \\
       WR & 78 & {\bf 77} & 0.52  \\
       RWR & {\bf 80} & 75 & {\bf 0.62}
        \end{tabular}
        }
		\caption{Performance of classification and correlation with successful object lifts and transports for various metrics of grasp quality based on 3D object meshes. The metrics SS, WR, and RWR all use our compliant suction contact model, and RWR uses our entire proposed method: checking seal formation, analyzing wrench resistance using the suction ring model, and computing robustness with Monte-Carlo sampling.}
		\tablabel{random-results}
\end{table}

\begin{figure}[t!]
\centering
\includegraphics[scale=0.7]{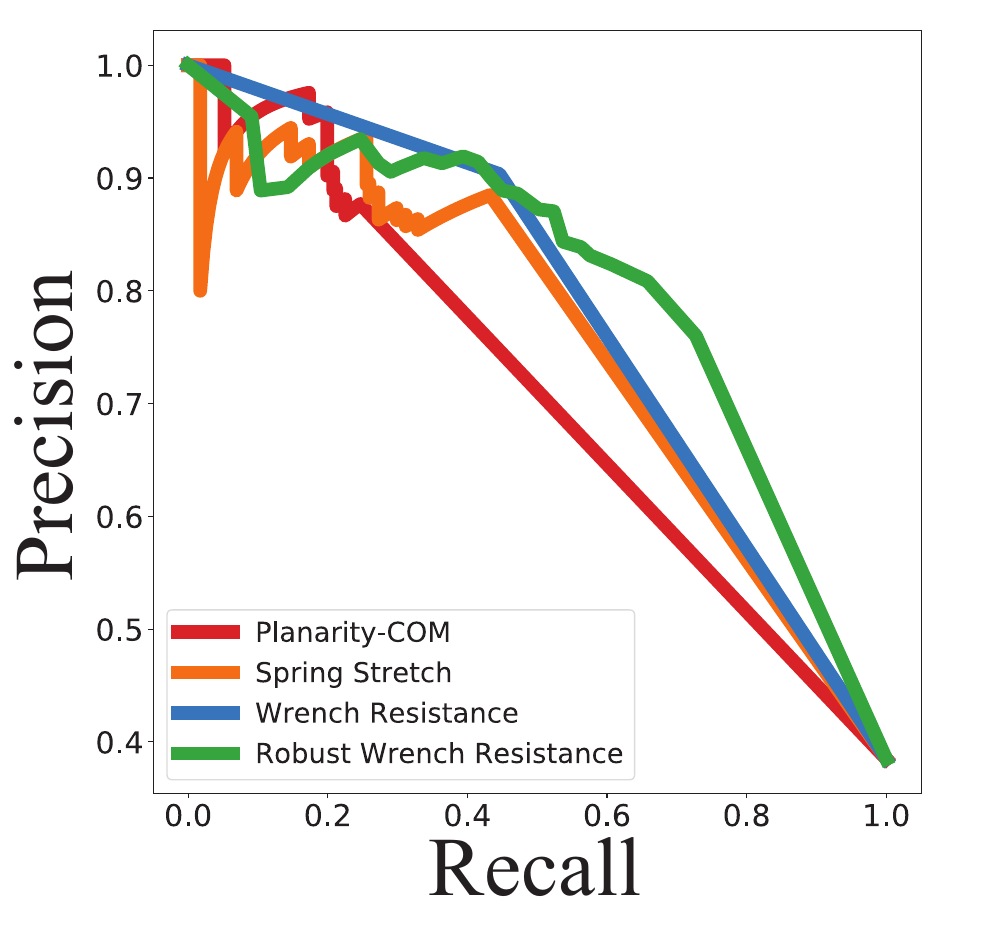}
\caption{Precision-Recall curve for classifying successful object lifts and transports using various metrics of grasp quality based on 3D object meshes.
\vspace*{-20pt}}
\figlabel{hyp1-pr}
\end{figure}

\subsection{Failure Modes}
Our system was not able to handle many objects due to material properties.
We broke up the failure objects into two categories:
\begin{enumerate}
	\item {\bf Imperceptible Objects:} Objects with (a) surface variations less than the spatial resolution of our Primesense Carmine 1.09 depth camera or (b) specularities or  transparencies that prevent the depth camera from sensing the object geometry. Thus the point-cloud-based grasping policies were not able to distinguish successes from failures.
	\item {\bf Impossible Objects:} Objects for which a seal cannot be formed either because objects are (a) non-porous or (b) lack a surface patch for which the suction cup can achieve a seal due to size or texture.
\end{enumerate}
\noindent These objects are illustrated in \figref{failure-datasets}.

\begin{figure}[t!]
\centering
\includegraphics[scale=0.22]{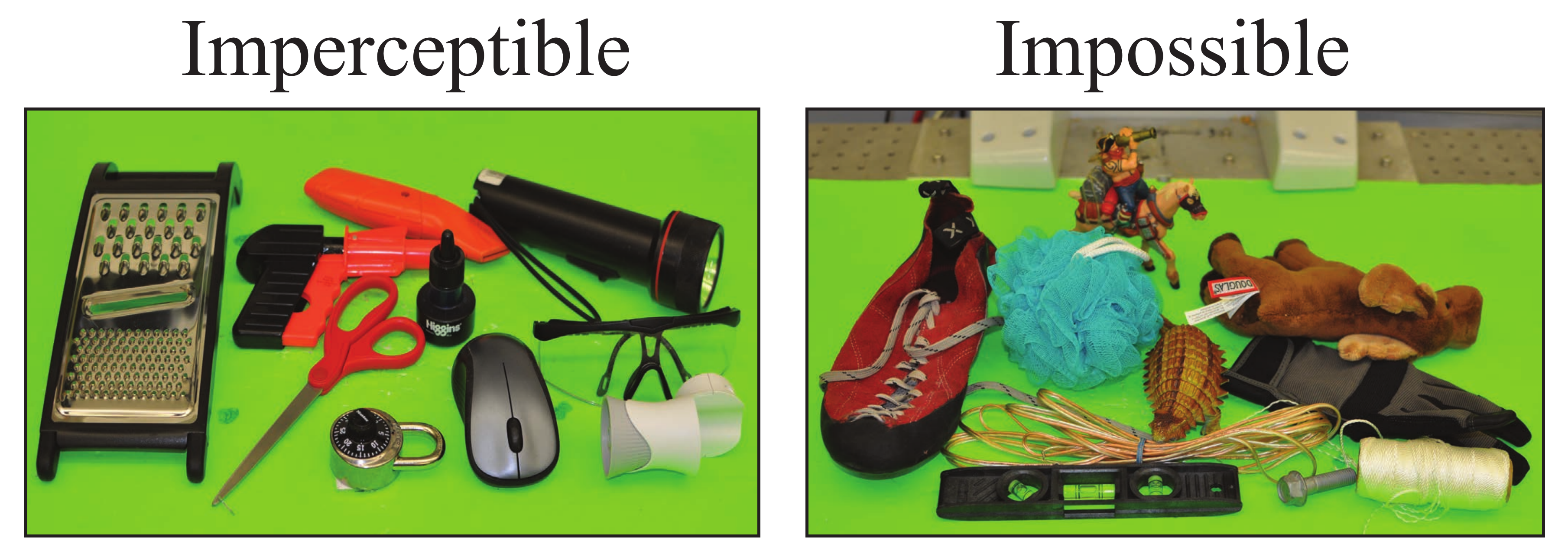}
\caption{Two categories that cannot be handled by any of the point-cloud based suction grasping policies. {\bf(Left)} Imperceptible objects, which cannot be handled by the system due to small surface variations that cannot be detected by the low-resolution depth sensor but do prevent seal formation. {\bf(Left)} Impossible objects, which cannot be handled by the system due to non-porosity or lack of an available surface to form a seal. }
\figlabel{failure-datasets}
\end{figure}

\section{Details of Quasi-Static Spring Seal Formation Model}
\seclabel{config-details}

In this section, we derive a detailed process for statically determining a final configuration of $C$ that achieves a complete seal against mesh $M$.
We assume that we are given a line of approach $\ell$ parameterized by $\mathbf{p}$, a target point on the surface of $M$, and $\mathbf{v}$, a vector pointing towards $P$ along the line of approach.

First, we choose an initial, undeformed configuration of $C$.
In this undeformed configuration of $C$, all of the springs of $C$ are in their resting positions, which means that the structural springs of $C$ form a right pyramid with a regular $n$-gon as its base.
This perfectly constrains the relative positions of the vertices of $C$, so all that remains is specifying the position and orientation of $C$ relative to the world frame.

We further constrain the position and orientation of $C$ such that $\ell$ passes through $\mathbf{a}$ and is orthogonal to the plane containing the base of $C$.
This leaves only the position of $\mathbf{a}$ and a rotation about $\ell$ as degrees of freedom.
For our purposes, the position of $\mathbf{a}$ along $\ell$ does not matter so long as $C$ is not in collision with $M$ and the base of $C$ is closer to $M$ than the apex is.
In general, we choose $\mathbf{a}$ such that $\lVert \mathbf{p} - \mathbf{a} \rVert > x + h$, where $x$ is the largest extent of the object's vertices.
For the rotation about $\ell$, we simply select a random initial angle.
Theoretically, the rotation could affect the outcome of our metric, but as long as $n$ is chosen to be sufficiently large, the result is not sensitive to the chosen rotation angle.

Next, given the initial configuration of $C$, we compute the final locations of the perimeter springs on the surface of $M$ under two main constraints:
\begin{itemize}
    \item The perimeter springs of $C$ must not deviate from their initial locations when projected back onto the plane containing the base of $C$'s initial right pyramid.
    \item The perimeter springs of $C$ must lie flush against the mesh $M$.
\end{itemize}
Essentially, this means that the perimeter springs will lie on the intersection of $M$ with a right prism $K$ whose base is the base of the initial configuration's right pyramid and whose height is sufficient such that $K$ passes all the way through $M$.
The base vertices of $C$ will lie at the intersection of $M$ and $K$'s side edges, and the perimeter springs of $C$ will lie along the intersection of $M$ and $K$'s side faces.

Finally, given a complete configuration of the perimeter vertices of $C$ as well as the paths of the perimeter springs along the surface of $M$, we compute the final location of the cup apex $\mathbf{a}$.
We work with three main constraints:
\begin{itemize}
    \item $\mathbf{a}$ must lie on $\ell$.
    \item $\mathbf{a}$ must not be below the surface of $M$ (i.e. $\mathbf{v}^T(\mathbf{a} - \mathbf{p}) \leq 0$).
    \item $\mathbf{a}$ should be chosen such that the average displacement between $\mathbf{a}$ and the perimeter vertices along $\mathbf{v}$ remains equal to $h$.
\end{itemize}
Let $\mathbf{a}^* = \mathbf{p} - t^*\mathbf{v}$. Then, the solution distance $t^*$ is given by
\begin{equation*}
    t^* = \min\Bigg(\Big[\frac{1}{n} \sum_{i=1}^n (\mathbf{v}_i - \mathbf{p})^T \mathbf{v}\Big] - h, 0\Bigg).
\end{equation*}

When thresholding the energy in each spring, we use a per-spring threshold of a $10\%$ change in length, which was used as the spring stretch limit in ~\cite{provot1995deformation}.

\section{Suction Contact Model}
\seclabel{contact-deriv}

The basis of contact wrenches for the suction ring model is illustrated in \figref{model}.
The contact wrenches are not independent due to the coupling of normal force and friction, and they may be bounded due to material properties.
In this section we prove that wrench resistance can be computed with quadratic programming, we derive constraints between the contact wrenches in the suction ring model, and we explain the limits of the soft finger suction contact models for a single suction contact.

\subsection{Computing Wrench Resistance with Quadratic Programming}
The object wrench set for a grasp using a contact model with $m$ basis wrenches is $\Lambda = \{\bw \in \bR^6 \mid \bw =  G \alpha \text{ for some } \alpha \in \mF\}$, where $G \in \bR^{6 \times m}$ is a set of $m$ basis wrenches in the object coordinate frame, and $\mF \subseteq \bR^m$ is a set of constraints on contact wrench magnitudes~\cite{murray1994mathematical}.
The grasp map $G$ can be decomposed as $G = A W$ where $A \in \bR^{6\times6}$ is the {\it adjoint transformation} mapping wrenches from the contact to the object coordinate frame and $W \in {6 \times m}$ is the {\it contact wrench basis}, a set of $m$ orthonormal basis wrenches in the contact coordinate frame~\cite{murray1994mathematical}.
\begin{definition}
A grasp $\bu$ achieves {\it wrench resistance} with respect to $\bw$ if $-\bw \in \Lambda$.
\end{definition}

\begin{proposition}
Let $G$ be the grasp map for a grasp $\bu$.
Furthermore, let $\epsilon^* = \text{argmin}_{\alpha \in \mF}\|G \alpha + \bw\|_2^2$.
Then $\bu$ can resist $\bw$ iff $\epsilon^* = 0$.
\end{proposition}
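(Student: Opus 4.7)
The statement is essentially a reformulation of the feasibility condition for wrench resistance as the optimal value of a convex quadratic program. There appears to be a minor notational slip in the statement (the symbol $\epsilon^*$ is introduced via $\operatorname{argmin}$ but is then compared to a scalar zero); I would read $\epsilon^*$ as the optimal value $\min_{\alpha\in\mF}\|G\alpha+\bw\|_2^2$, and the proof should make this reading explicit at the outset.

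The plan is to prove the biconditional by unpacking the definitions on each side and observing that the squared residual $\|G\alpha+\bw\|_2^2$ is a non-negative quantity that vanishes exactly when $G\alpha=-\bw$. First I would recall that by definition $\bu$ resists $\bw$ iff $-\bw\in\Lambda$, which by the definition of $\Lambda$ means there exists some $\alpha\in\mF$ with $G\alpha=-\bw$, equivalently $\|G\alpha+\bw\|_2^2=0$. For the forward direction, I would take such an $\alpha$ and conclude $\epsilon^*\le 0$; since the objective is a squared Euclidean norm it is non-negative, so in fact $\epsilon^*=0$. For the reverse direction, I would note that $\mF$ (as laid out in Section IV-B) is the intersection of finitely many closed half-spaces and hence closed, while the objective is continuous and coercive on the affine subspace where it is bounded, so the infimum is attained at some $\alpha^\star\in\mF$; if $\epsilon^*=0$ then $\|G\alpha^\star+\bw\|_2^2=0$, which forces $G\alpha^\star=-\bw$, giving $-\bw\in\Lambda$.

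A small subtlety worth addressing is the attainment of the minimum in the reverse direction: strictly speaking, one only needs the infimum to equal $0$, and because the non-negative objective takes value $0$ iff $G\alpha+\bw=0$, the existence of a minimizing sequence with objective tending to $0$ combined with closedness of $\mF$ and continuity of the linear map $\alpha\mapsto G\alpha+\bw$ is enough to extract an exact solution (alternatively, one may directly invoke closedness of $G\mF+\bw$ as an image of a polyhedron under a linear map). I would include a brief sentence acknowledging this so that the equivalence with $\epsilon^*=0$ is rigorous rather than informal.

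Finally, I would close by noting the computational consequence that motivates the proposition in the first place: since the constraints defining $\mF$ (friction, material, suction) are all linear inequalities, and the objective $\|G\alpha+\bw\|_2^2$ is a convex quadratic in $\alpha$, the wrench-resistance test reduces to a convex quadratic program, which is solvable reliably and efficiently. The real work here is conceptual rather than technical; the only genuine obstacle is the notational clarification at the start, after which the argument is essentially a two-line unpacking of definitions.
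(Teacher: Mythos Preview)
Your proposal is correct and follows essentially the same approach as the paper: a direct unpacking of the definition of $\Lambda$ in each direction, using that $\|G\alpha+\bw\|_2^2=0$ iff $G\alpha=-\bw$. Your version is actually more careful than the paper's, which neither flags the argmin/min notational slip nor justifies attainment of the infimum in the reverse direction.
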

\begin{proof}
($\Rightarrow$).
Assume $\bu$ can resist $\bw$.
Then $-\bw \in \Lambda$ and therefore $\exists \alpha \in \mF$ such that $G \alpha =  -\bw \Rightarrow G \alpha + \bw = \mathbf{0}$.
($\Leftarrow$).
Assume $\epsilon^* = 0$.
Then $\exists \alpha \in \mF$ such that $G \alpha + \bw = \mathbf{0} \Rightarrow G \alpha = -\bw \Rightarrow -\bw \in \Lambda$.
\end{proof}
When the set of admissible contact wrench magnitudes $\mF$ is defined by linear equality and inequality constraints, the $\text{min}_{\alpha \in \mF}\|G \alpha + \bw\|_2^2$, is a Quadratic Program which can be solved exactly by modern solvers.

\subsection{Derivation of Suction Ring Contact Model Constraints}
\begin{figure}[t!]
\centering
\includegraphics[width=0.9\textwidth]{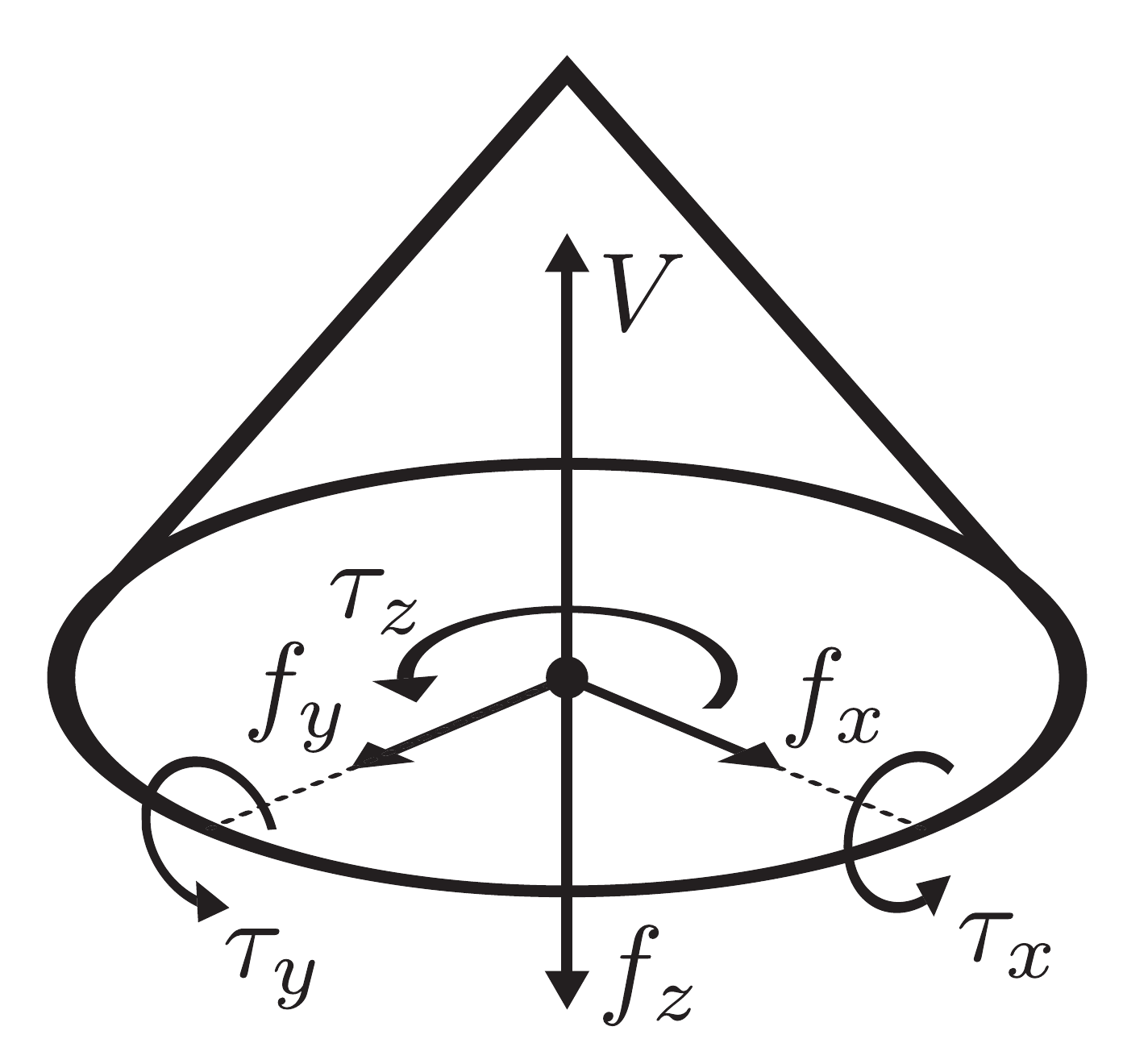}
\caption{Wrench basis for the compliant suction ring contact model. The contact exerts a constant pulling force on the object of magnitude $V$ and additionally can push or pull the object along the contact $z$ axis with force $f_z$. The suction cup material exerts a normal force $f_N = f_z + V$ on the object through a linear pressure distribution on the ring. This pressure distribution induces a friction limit surface bounding the set of possible frictional forces in the tangent plane $f_t = (f_x, f_y)$ and the torsional moment $\tau_z$, and also induces torques $\tau_x$ and $\tau_y$ about the contact $x$ and $y$ axes due to elastic restoring forces in the suction cup material.
\vspace*{-20pt}}
\figlabel{model}
\end{figure}

Our suction contact model assumes the following:
\begin{enumerate}
	\item Quasi-static physics (e.g. inertial terms are negligible).
	\item The suction cup contacts the object along a circle of radius $r$ (or ``ring") in the $xy-$plane of the contact coordinate frame.
	\item The suction cup material behaves as a ring of infinitesimal springs per unit length. Specifically, we assume that the pressure along the $z-$axis in contact coordinate frame satisfies $p(\theta) = k \delta_z(\theta)$ where $\delta_z$ is displacement along the $z$-axis and $k \in \bR$ is a spring constant (per unit length). The cup does not permit deformations along the $x$ or $y$ axes.
	\item The suction cup material is well approximated by a spring-mass system. Furthermore, points on the contact ring are in static equilibrium with a linear displacement along the $z-$axis from the equilibrium position: $\delta_z(\theta) = \delta_0 + a r \cos(\theta) + b r \sin(\theta)$. Together with Assumption 3, this implies that:
	 \begin{align*}
	 p(\theta) = p_0 + p_x \cos(\theta) + p_y \sin(\theta)
	\end{align*}	
	\noindent for real numbers $p_0, p_x,$ and $p_y$. 
	\item The force on the object due to the vacuum is a constant $-V$ along the $z-$ axis of the contact coordinate frame.
	\item The object exerts a normal force on the object $f_N = f_z + V$ where $f_z$ is the force due to actuation. This assumption holds when analyzing the ability to resist disturbing wrenches because the material can apply passive forces but may not hold when considering target wrenches that can be actuated.
	%\item The suction cup material exerts a constant normal force on the object to exactly balance the force due to suction: $f_{N} = V$.
	%\item The pressure between the suction contact ring and object is linear. At angle $\theta$, $p(\theta) = p_0 + a r cos(\theta) + b r sin(\theta)$.
\end{enumerate}

The magnitudes of the contact wrenches are constrained due to (a) the friction limit surface~\cite{kao2008contact}, (b) limits on the elastic behavior of the suction cup material, and (c) limits on the vacuum force.

\subsubsection{Friction Limit Surface}
The values of the tangential and torsional friction are coupled through the planar external wrench and thus are jointly constrained.
This constraint is known as the {\it friction limit surface}~\cite{kao2008contact}.
We can approximate the friction limit surface by computing the maximum friction force and torsional moment under a pure translation and a pure rotation about the contact origin.

The tangential forces have maximum magnitude under a purely translational disturbing wrench with unit vector $\hat{\bv}$ in the direction of the velocity:
\begin{align*}
	f_x &\leq \int \limits_{0}^{2 \pi} \mu  \hat{\bv}_x p(\theta) d \theta \\
    &\leq \int \limits_{0}^{2 \pi} \mu  \hat{\bv}_x \left(p_0 + p_x \cos(\theta) + p_y \sin(\theta)\right) d \theta \\
    &\leq 2 \pi \mu \hat{\bv}_x p_0 \\
    f_y &\leq 2 \pi \mu \hat{\bv}_y p_0 \\
    \|f_f\|_2^2 &\leq (2 \pi \mu \hat{\bv}_x p_0)^2 + (2 \pi \mu \hat{\bv}_y p_0)^2 \\
    &= (2 \pi \mu p_0)^2 \left( \hat{\bv}_x^2 + \hat{\bv}_y^2 \right) \\
    &= (2 \pi \mu p_0)^2 \\
    &= \mu f_N
\end{align*}

The torsional moment has a maximum moment under a purely rotational disturbing wrench about the contact $z-$ axis.
This disturbing wrench can be described with a unit vector $\hat{\bv}(\theta) = (\sin(\theta), -\cos(\theta), 0)$.
Thus the torsional moment is bounded by:
\begin{align*}
	|\tau_z| &\leq \int \limits_{0}^{2 \pi} \mu r p(\theta) d \theta = \int \limits_{0}^{2 \pi} \mu r \left(p_0 + p_x \cos(\theta) + p_y \sin(\theta)\right) d \theta \\
	&\leq 2 \pi \mu r p_0 \\
	&\leq r \mu f_N
\end{align*}

We can approximate the friction limit surface by the ellipsoid~\cite{kao1992quasistatic, kao2008contact}:
\begin{align*}
	\frac{\|f_t\|_2^2}{(\mu f_N)^2} + \frac{ |\tau_z|^2}{(r \mu f_N)^2} \leq 1
\end{align*}
While this constraint is convex, in practice many solvers for Quadratically Constrained Quadratic Programs (QCQPs) assume nonconvexity.
We can turn this into a linear constraint by bounding tangential forces and torsional moments in a rectangular prism inscribed within the ellipsoid:
\begin{align*}
	|f_x| \leq \frac{\sqrt{3}}{3} \mu f_N \\
	|f_y| \leq \frac{\sqrt{3}}{3} \mu f_N \\
	|\tau_z| \leq \frac{\sqrt{3}}{3} r \mu f_N \\
\end{align*}

\subsubsection{Elastic Restoring Torques}
The torques about the $x$ and $y$ axes are also bounded.
Let $\bw(\theta) = (r \cos(\theta), r \sin(\theta), 0)$.
Then:
\begin{align*}
	\tau_t &= \int \limits_{0}^{2 \pi} (\bw(\theta) \times \be_z) p(\theta) d \theta \\
	\tau_x &= \int \limits_{0}^{2 \pi} r \sin(\theta) p(\theta) d \theta \\
	&= \int \limits_{0}^{2 \pi} r \sin(\theta) \left(p_0 + p_x \cos(\theta) + p_y \sin(\theta)\right) d \theta \\
	&= \int \limits_{0}^{2 \pi} r p_y \sin^2(\theta) d \theta \\
	&= \pi r p_y \\
	\tau_y &= \pi r p_x \\
	\|\tau_{e}\|_2^2 &= \pi^2 r^2 (p_x^2 + p_y^2) \\
	&\leq \pi^2 r^2 \kappa^2
\end{align*}
\noindent where $\kappa$ is the {\it elastic limit} or {\it yield strength} of the suction cup material, defined as the stress at which the material begins to deform plastically instead of linearly.

\subsubsection{Vacuum Limits}
The ring contact can exert forces $f_z$ on the object along the $z$ axis through motor torques that transmit forces to the object through the ring of the suction cup.
Under these assumptions, the normal force exerted on the object by the suction cup material is:
\begin{align*}
	f_N &= \int \limits_{0}^{2 \pi} p(\theta) d \theta = \int \limits_{0}^{2 \pi} \left(p_0 + p_x \cos(\theta) + p_y \sin(\theta)\right) d \theta \\ 
	&= 2 \pi p_0 
\end{align*}
\noindent Note also that $f_N = f_z + V$, where $f_z$ is the $z$ component of force on the object, since the normal force must offset the force due to vacuum $V$ even when no force is being applied on the object.

\subsubsection{Constraint Set}
Taking all constraints into account, we can describe $\mF$ with a set of linear constraints:
\begin{align*}
	\text{{\bf Friction:}} && \sqrt{3} | f_x | &\leq \mu f_N & \sqrt{3} | f_y | &\leq \mu f_N & \sqrt{3} | \tau_z| &\leq r \mu f_N  \\
	\text{{\bf Material:}} && \sqrt{2}| \tau_x | &\leq \pi  r \kappa & \sqrt{2} | \tau_y | &\leq \pi  r \kappa \\
	\text{{\bf Suction:}} &&  f_z &\geq -V
\end{align*}
\noindent Since these constraints are linear, we can solve for wrench resistance in the our contact model using Quadratic Programming.
In this paper we set $V=250N$ and $\kappa=0.005$.

%By Proposition 5.2 of~\cite{murray1994mathematical}, $G$ must be surjective to achieve force closure.
%However, $rank(G) \leq \text{min}(rank(A), rank(W)) < 6$ since $rank(W) = 4$.
%Therefore $G$ is not surjective.

%The grasp map $G$ can be decomposed as $G = A W$ where $A \in \bR^{6\times6}$ is the {\it adjoint transformation} mapping wrenches from the contact to the object coordinate frame and $W \in {6 \times m}$ is the {\it contact wrench basis}, a set of $m$ orthonormal basis wrenches in the contact coordinate frame~\cite{murray1994mathematical}.
%In this paper, we use an object coordinate frame centered on the object center-of-mass and a contact coordinate frame that is centered on the point of contact $\rho \in \bR^3$ and oriented such that the $z$axis aligns with the outward-pointing surface normal $\bn$ at $\rho$ and the $x-$ and $y-$ axes are an orthogonal basis to the tangent plane at $\rho$ spanned by the orthonormal tangent vectors $\bt_x$ and $\bt_y$.

%A common model for suction contact 
%The most common suction contact model in the literature~\cite{kolluru1998modeling, mantriota2007theoretical, stuart2015suction, valencia20173d, yoshida2010design}

\subsection{Limits of the Soft Finger Suction Contact Model}
The most common suction contact model in the literature~\cite{kolluru1998modeling, mantriota2007theoretical, stuart2015suction, valencia20173d, yoshida2010design} considers normal forces from motor torques, suction forces from the pressure differential between inside the cup and the air outside the object, and both tangential and torsional friction resulting from the contact area between the cup and the object.
Let $\be_x$, $\be_y$, and $\be_z$ be unit basis vectors along the $x$, $y$, and $z$ axes.
The contact model is specified by:
\begin{align*}
	W &= \left[ \begin{array}{cccc}
	\be_x & \be_y & \be_z & \bzero \\
	\bzero & \bzero & \bzero & \be_z
	\end{array}
	\right] \\
	\alpha &= (f_x, f_y, f_z, \tau_z) \in \mF \text{ if and only if:} \\
	&\sqrt{f_x^2 + f_y^2} \leq \mu f_z \\
	&| \tau_z | \leq \gamma | f_z |
\end{align*}
\noindent The first constraint enforces Coulomb friction with coefficient $\mu$.
%The second constraint bounds the normal force by the maximum vacuum force $f_v$, which can only pull out of the object, and the maximum motor force $f_m$, which can only push into the object along the inward-pointing surface normal.
The second constraint ensures that the net torsion is bounded by the normal force, since torsion results from the net frictional moment from a contact area.
Unlike contact models for rigid multifinger grasping, $f_z$ can be positive or negative due to the pulling force of suction.

\begin{proposition}
Under the soft suction contact model, a grasp with a single contact point cannot resist torques about axes in the contact tangent plane.
\end{proposition}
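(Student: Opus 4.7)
The plan is to argue the proposition directly from the structure of the wrench basis $W$. Under the soft finger suction contact model, $W$ has only four columns, corresponding to $f_x$, $f_y$, $f_z$, and $\tau_z$ in the contact frame, so every wrench in the column space of $W$ has identically zero torque about the contact $\be_x$ and $\be_y$ axes. My proof would exploit this dimensional deficiency: no element of $\mF$, no matter how large, can generate the missing torque components.

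First I would recall from the earlier proposition that wrench resistance to $\bw$ is equivalent to $-\bw \in \Lambda = \{AW\alpha : \alpha \in \mF\}$, where $A$ is the adjoint transformation from the contact to the object frame. Since $A$ is invertible, this is in turn equivalent to $-A^{-1}\bw \in \{W\alpha : \alpha \in \mF\}$. Next I would inspect $W$ row by row: the last two rows (those associated with $\tau_x$ and $\tau_y$ in the contact frame) are identically zero, so for any $\alpha \in \bR^4$ the contact-frame wrench $W\alpha$ has its $\tau_x$ and $\tau_y$ entries equal to zero regardless of the admissible set $\mF$.

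To finish, I would choose $\bw$ to be a pure torque about an arbitrary axis in the contact tangent plane, i.e., so that $A^{-1}\bw$ in the contact frame has the form $(0,0,0,\tau_x,\tau_y,0)$ with $(\tau_x,\tau_y)\neq(0,0)$. The equation $W\alpha = -A^{-1}\bw$ then requires the left-hand side to have a nonzero entry in one of its last two rows, contradicting the previous step. Hence no such $\alpha$ exists and the grasp cannot resist $\bw$.

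The argument is essentially a rank/column-space observation, so the only real subtlety — and the only place I expect to have to be careful — is the frame bookkeeping: the proposition speaks of ``torques about axes in the contact tangent plane,'' which I interpret as the span of $\be_x,\be_y$ in the contact coordinate frame, and I must apply $A^{-1}$ consistently so that the vanishing of the $\tau_x,\tau_y$ rows of $W$ transfers cleanly to the non-resistibility of $\bw$. Once the frames are handled, the result follows in a single line from the structure of $W$.
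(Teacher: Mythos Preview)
Your proposal is correct and takes essentially the same approach as the paper: both arguments observe that the column space (range) of $W$ contains no wrench with a nonzero $\tau_x$ or $\tau_y$ component, so any such torque cannot be resisted. The paper's proof is the one-line version---$\bw=(\bzero,\tau_e)$ is orthogonal to every column of $W$---while you unpack the same fact row-wise and are more explicit about the adjoint $A$; one cosmetic slip is that the zero rows of $W$ are rows four and five (the $\tau_x,\tau_y$ rows), not literally the ``last two,'' but your parenthetical makes your intent clear.
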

\begin{proof}
%By Proposition 5.2 of~\cite{murray1994mathematical}, $G$ must be surjective to achieve force closure.
%However, $rank(G) \leq \text{min}(rank(A), rank(W)) < 6$ since $rank(W) = 4$.
%Therefore $G$ is not surjective.
The wrench $\bw = (\bzero, \tau_e)$ is not in the range of $W$ because it is orthogonal to every basis wrench (column of $W$).
\end{proof}

The null space of $W$ is spanned by the wrenches $\bw_1 = (\bzero, \be_x)$ and $\bw_2 = (\bzero, \be_y)$, suggesting that a single suction contact cannot resist torques in the tangent plane at the contact.
This defies our intuition since empirical evidence suggests that a single point of suction can reliably hold and transport objects to a receptacle in applications such as the Amazon Picking Challenge~\cite{eppner2016lessons, hernandez2016team}.

\section{GQ-CNN Performance}
\seclabel{training}
The GQ-CNN trained on Dex-Net 3.0 had an accuracy of 93.5$\%$ on a held out validation set of approximately 552,000 datapoints.
\figref{roc-conv} shows the precision-recall curve for the GQ-CNN validation set and the optimized 64 Conv1\_1 filters, each of which is 7$\times$7.
\figref{policy-examples} illustrates the probability of success predicted by the GQ-CNN on candidates grasps from several real point clouds.

\begin{figure*}[t!]
\centering
\includegraphics[scale=0.6]{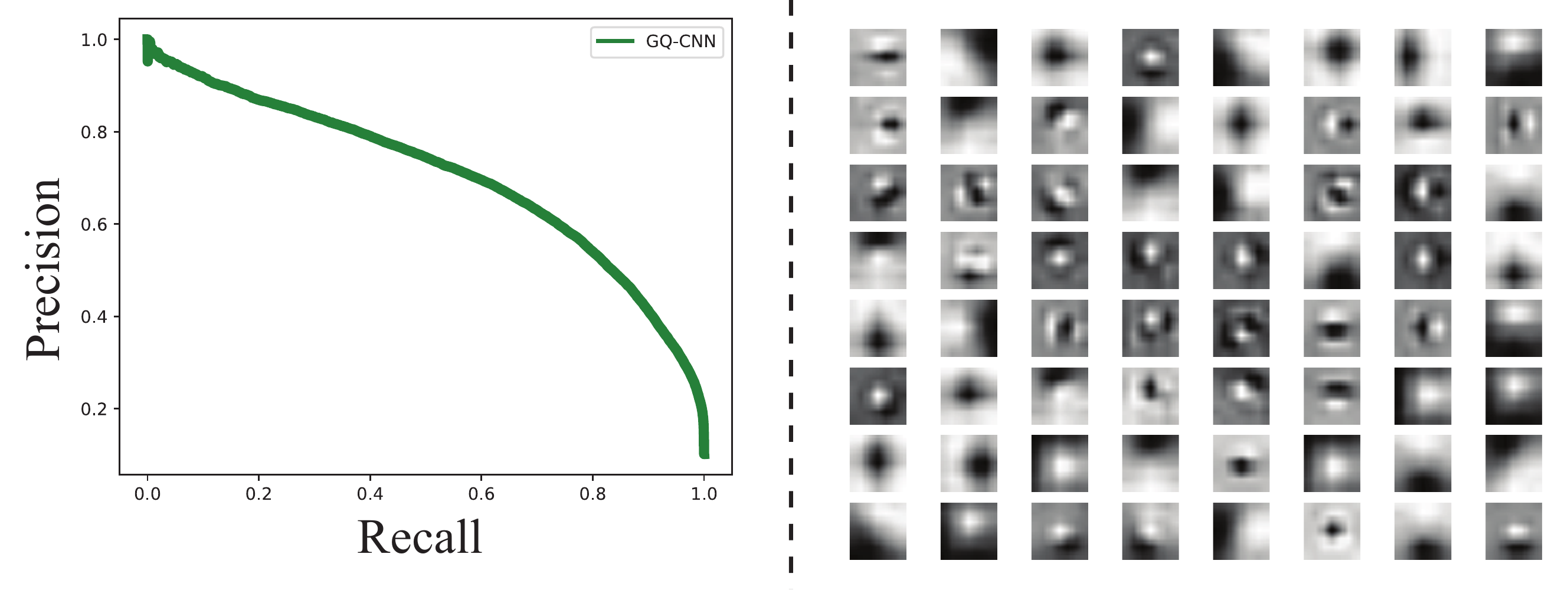}
\caption{{\bf(Left)} Precision-recall curve for the GQ-CNN trained on Dex-Net 3.0 on the validation set of 552,000 pairs of grasps and images. {\bf(Right)} The 64 Conv1\_1 filters of the GQ-CNN. Each is 7$\times$7. We see that the network learns circular filters which may be used to assess the surface curvature about the ring of contact between the suction cup and object.}
\figlabel{roc-conv}
\end{figure*}

\begin{figure}[t!]
\centering
\includegraphics[scale=0.30]{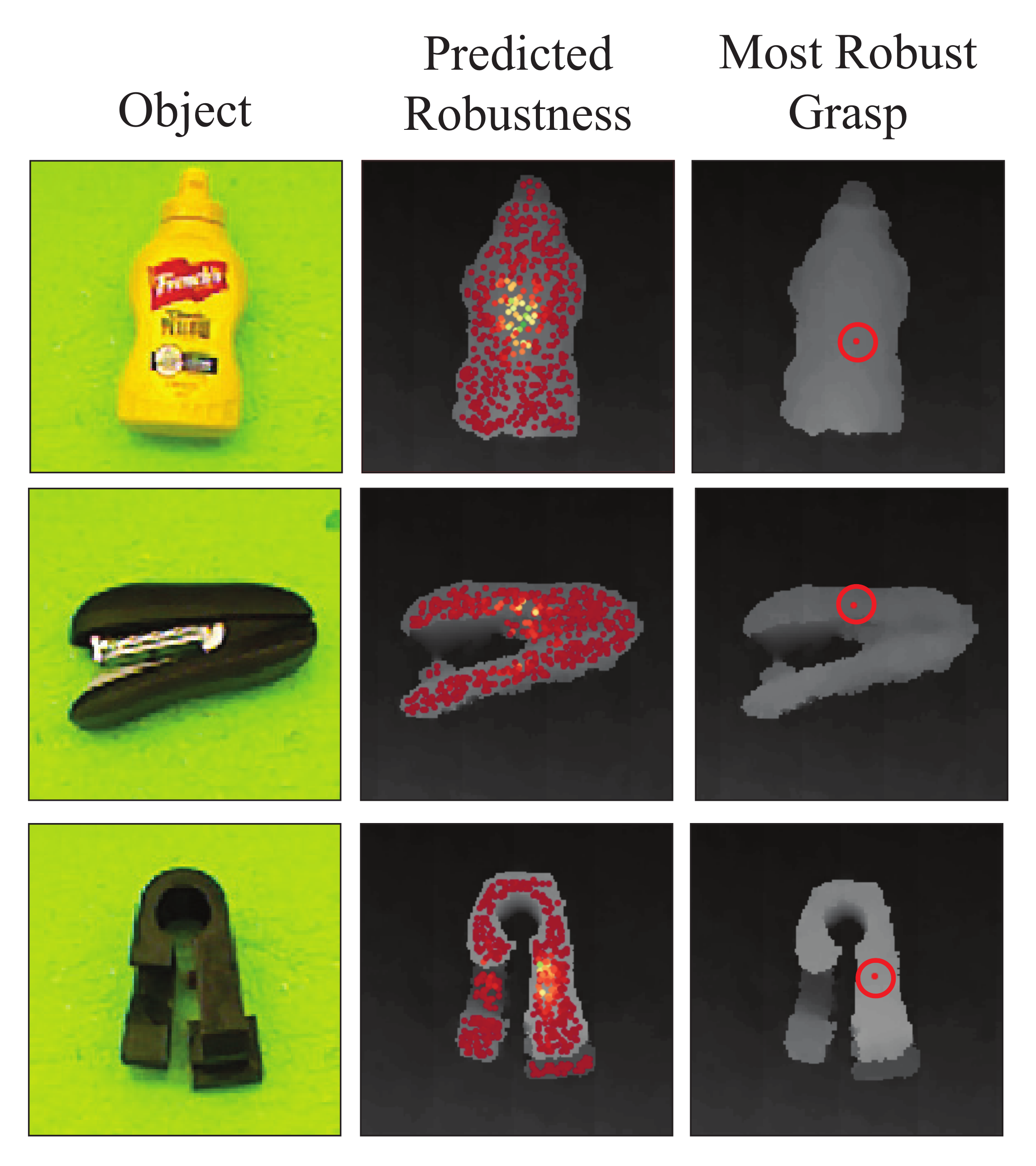}
\caption{Robust grasps planned with the Dex-Net 3.0 GQ-CNN-based policy on example RGB-D point clouds. \textbf{(Left)} The robot is presented an object in isolation. \textbf{(Middle)} Initial candidate suction target points colored by the predicted probability of success from zero (red) to one (green). Robust grasps tend to concentrate around the object centroid. \textbf{(Right)} The policy optimizes for the grasp with the highest probability of success using the Cross Entropy Method.
\vspace*{-20pt}}
\figlabel{policy-examples}
\end{figure}
\section{Environment Model}

\begin{figure*}[t!]
\centering
\includegraphics[width=0.8\textwidth]{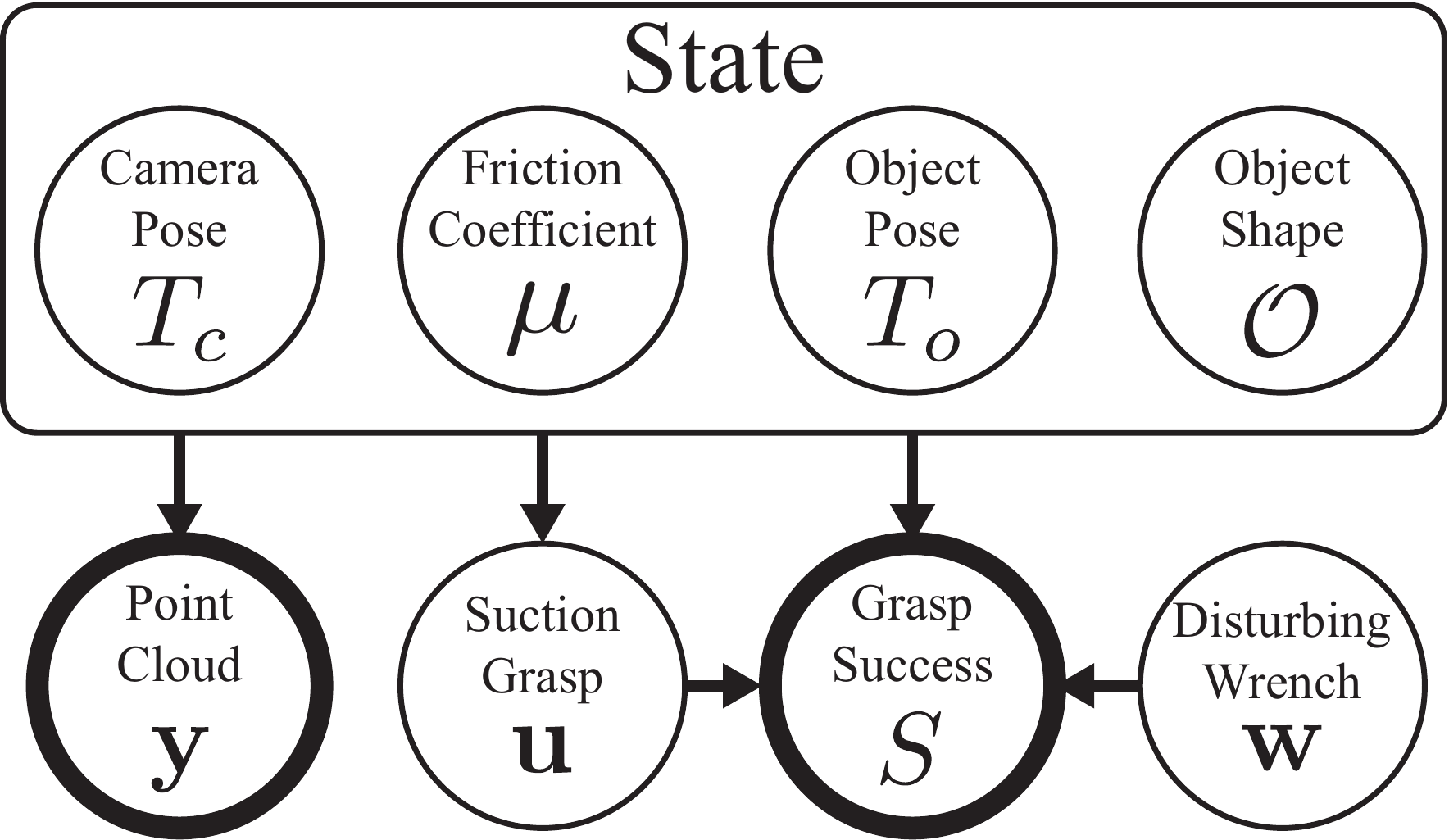}
\caption{A probabilistic graphical model of the relationship between the ability to resist external wrenches e.g. due to gravity under perturbations in object pose, gripper pose, camera pose, and friction.}
\figlabel{graphical-model}
\end{figure*}

To learn to predict grasp robustness based on noisy point clouds, we generate the Dex-Net 3.0 training dataset of point clouds, grasps, and grasp success labels by sampling tuples $(S_i, \bu_i, \by_i)$ from a joint distribution $p(S, \bu, \bx, \by)$ that is composed of distributions on:
\begin{itemize}
	\item {\bf States: $p(\bx)$:} A prior on possible objects, object poses, and camera poses that the robot will encounter.
	\item {\bf Grasp Candidates: $p(\bu | \bx)$:} A prior constraining grasp candidates to target points on the object surface.
	\item {\bf Grasp Successes $p(S \mid \bu, \bx)$:} A stochastic model of wrench resistance for the gravity wrench.
	\item {\bf Observations $p(\by \mid \bx)$:} A sensor noise model.
\end{itemize}
\noindent Our graphical model is illustrated in \figref{graphical-model}.

\begin{table}[t]
\centering
\resizebox{\textwidth}{!}{%
   \begin{tabular}{| r | c |}
   \hline
  {\bf Distribution} & {\bf Description}\\ \hline
  $p(\mu)$ & truncated Gaussian distribution over friction coefficients \\
  \hline
  $p(\mO)$ & discrete uniform distribution over 3D object models \\
  \hline
  $p(T_{o} | \mO)$ & \begin{tabular}[x]{@{}c@{}} continuous uniform distribution over the discrete set of \\ object stable poses and planar poses on the table surface \end{tabular}  \\
  \hline
  $p(T_{c})$ & \begin{tabular}[x]{@{}c@{}} continuous uniform distribution over spherical coordinates \\ for radial bounds $[r_{\ell}, r_{u}]$ and polar angle in $[0, \delta]$ \end{tabular} \\
  \hline
  \end{tabular}}
        \caption{Details of the distributions used in the Dex-Net 2.0 graphical model for generating the Dex-Net training dataset.  }
		\tablabel{distributions}
\end{table}

\subsection{Details of Distributions}
We follow the state model of~\cite{mahler2017dex}, which we repeat here for convenience.
The parameters of the sampling distributions were set by maximizing average precision of the $Q$ values using grid search for a set of grasps attempted on an ABB YuMi robot on a set of known 3D printed objects (see \secref{additional-known-objects}).

We model the state distribution as $p(\bx) = p(\mu) p(\mO) p(T_{o} | \mO)  p(T_{c})$.
We model $p(\mu)$ as a Gaussian distribution $\mN(0.5, 0.1)$ truncated to $[0,1]$.
We model $p(\mO)$ as a discrete uniform distribution over 3D objects in a given dataset.
We model $p(T_{o} | \mO) = p(T_{o} | T_{s}) p(T_{s} | \mO) $, where is $p(T_{s} | \mO)$ is a discrete uniform distribution over object stable poses and $p(T_{o} | T_{s})$ is uniform distribution over 2D poses: $\mU([-0.1,0.1]\times[-0.1,0.1]\times[0,2\pi))$.
We compute stable poses using the quasi-static algorithm given by Goldberg et al.~\cite{goldberg1999part}.
We model $p(T_{c})$ as a uniform distribution on spherical coordinates $r, \theta, \phi \sim \mU([0.5, 0.7]\times[0,2\pi)\times[0.01\pi, 0.1\pi])$, where the camera optical axis always intersects the center of the table.
The parameters of the sampling distributions were set by maximizing average precision of the $Q$ values using grid search for a set of grasps attempted on an ABB YuMi robot on a set of known 3D printed objects (see \secref{additional-known-objects}).

Our grasp candidate model $p(\bu \mid \bx)$ is a uniform distribution over points samples on the object surface, with the approach direction defined by the inward-facing surface normal at each point.

We follow the observation model of~\cite{mahler2017dex}, which we repeat here for convenience.
Our observation model $p(\by \mid \bx)$ model images as $\by = \alpha * \hat{\by} + \epsilon$ where $\hat{\by}$ is a rendered depth image created using OSMesa offscreen rendering.
We model $\alpha$ as a Gamma random variable with shape$=1000.0$ and scale=$0.001$.
We model $\epsilon$ as Gaussian Process noise drawn with measurement noise $\sigma=0.005$ and kernel bandwidth $\ell = \sqrt{2}px$.

Our grasp success model $p(S \mid \bu, \bx)$ specifies a distribution over wrench resistance due to perturbations in object pose, gripper pose, friction coefficient, and the disturbing wrench to resist.
Specifically, we model $p(S \mid \bu, \bx) = p(S \mid \hat{\bu}, \hat{\bx}, \bw) p(\hat{\bx} \mid \bx) p(\hat{\bu} \mid \bu) p(\bw)$.
We model $p(\bw)$ as the wrench exerted by gravity on the object center-of-mass with zero-mean Gaussian noise $\mN(\mathbf{0}_{3}, 0.01 \mathbf{I}_{3})$ assuming as mass of 1.0kg.
We model $p(\hat{\bu} \mid \bu)$ as a grasp perturbation distribution where the suction target point is perturbed by zero-mean Gaussian noise $\mN(\mathbf{0}_{3}, 0.001 \mathbf{I}_{3})$ and the approach direction is perturbed by zero-mean Gaussian noise in the rotational component of Lie algebra coordinates $\mN(\mathbf{0}_{3}, 0.1 \mathbf{I}_{3})$.
We model $p(\hat{\bx} \mid \bx)$ as a state perturbation distribution where the pose $T_o$ is perturbed by zero-mean Gaussian noise in  Lie algebra coordinates with translational component $\mN(\mathbf{0}_{3}, 0.001 \mathbf{I}_{3})$ and rotational component $\mN(\mathbf{0}_{3}, 0.1 \mathbf{I}_{3})$ and the object center of mass is perturbed by zero-mean Gaussian noise $\mN(\mathbf{0}_{3}, 0.0025 \mathbf{I}_{3})$.
We model $p(S \mid \hat{\bu}, \hat{\bx}, \bw)$ a Bernoulli with parameter 1 if $\hat{\bu}$ resists $\bw$ given the state $\hat{\bx}$ and parameter 0 if not.

\subsection{Implementation Details}
To efficiently implement sampling, we make several optimizations.
First, we precompute the set of grasps for every 3D object model in the database and take a fixed number of samples of grasp success from $p(S \mid \bu, \bx)$ using quadratic programming for wrench resistance evaluation.
We convert the samples to binary success labels by thresholding the sample mean by $\tau = 0.5$.
We also render a fixed number of depth images for each stable pose independently of grasp success evaluation.
Finally, we sample a set of candidate grasps from the object in each depth image and transform the image to generate a suction grasp thumbnail centered on the target point and oriented to align the approach axis with the middle column of pixels for GQ-CNN training.

\section*{Acknowledgments}
{\small
This research was performed at the AUTOLAB at UC Berkeley in affiliation with the Berkeley AI Research (BAIR) Lab, the Real-Time Intelligent Secure Execution (RISE) Lab, and the CITRIS ”People and Robots” (CPAR) Initiative.
The authors were supported in part by donations from Siemens, Google, Honda, Intel, Comcast, Cisco, Autodesk, Amazon Robotics, Toyota Research Institute, ABB, Samsung, Knapp, and Loccioni, Inc and by the Scalable Collaborative Human-Robot Learning (SCHooL) Project, NSF National Robotics Initiative Award 1734633.
Any opinions, findings, and conclusions or recommendations expressed in this material are those of the author(s) and do not necessarily reflect the views of the Sponsors.
We thank our colleagues who provided helpful feedback, code, and suggestions, in particular Ruzena Bajcsy, Oliver Brock, Peter Corke, Chris Correa, Ron Fearing, Roy Fox, Bernhard Guetl, Menglong Guo, Michael Laskey, Andrew Lee, Pusong Li, Jacky Liang, Sanjay Krishnan, Fritz Kuttler, Stephen McKinley, Juan Aparicio Ojea, Michael Peinhopf, Peter Puchwein, Alberto Rodriguez, Daniel Seita, Vishal Satish, and Shankar Sastry.
}

\addtolength{\textheight}{-3.5cm}

%\balance
\bibliographystyle{IEEEtranS}
\bibliography{bibliography}

\begin{thebibliography}{10}
\providecommand{\url}[1]{#1}
\csname url@rmstyle\endcsname
\providecommand{\newblock}{\relax}
\providecommand{\bibinfo}[2]{#2}
\providecommand\BIBentrySTDinterwordspacing{\spaceskip=0pt\relax}
\providecommand\BIBentryALTinterwordstretchfactor{4}
\providecommand\BIBentryALTinterwordspacing{\spaceskip=\fontdimen2\font plus
\BIBentryALTinterwordstretchfactor\fontdimen3\font minus
  \fontdimen4\font\relax}
\providecommand\BIBforeignlanguage[2]{{%
\expandafter\ifx\csname l@#1\endcsname\relax
\typeout{** WARNING: IEEEtran.bst: No hyphenation pattern has been}%
\typeout{** loaded for the language `#1'. Using the pattern for}%
\typeout{** the default language instead.}%
\else
\language=\csname l@#1\endcsname
\fi
#2}}

\bibitem{ali2010review}
A.~Ali, M.~Hosseini, and B.~Sahari, ``A review of constitutive models for
  rubber-like materials,'' \emph{American Journal of Engineering and Applied
  Sciences}, vol.~3, no.~1, pp. 232--239, 2010.

\bibitem{bahr1996design}
B.~Bahr, Y.~Li, and M.~Najafi, ``Design and suction cup analysis of a wall
  climbing robot,'' \emph{Computers \& electrical engineering}, vol.~22, no.~3,
  pp. 193--209, 1996.

\bibitem{bohg2014data}
J.~Bohg, A.~Morales, T.~Asfour, and D.~Kragic, ``Data-driven grasp
  synthesis—a survey,'' \emph{IEEE Trans. Robotics}, vol.~30, no.~2, pp.
  289--309, 2014.

\bibitem{correll2016analysis}
N.~Correll, K.~E. Bekris, D.~Berenson, O.~Brock, A.~Causo, K.~Hauser, K.~Okada,
  A.~Rodriguez, J.~M. Romano, and P.~R. Wurman, ``Analysis and observations
  from the first amazon picking challenge,'' \emph{IEEE Transactions on
  Automation Science and Engineering}, 2016.

\bibitem{domae2014fast}
Y.~Domae, H.~Okuda, Y.~Taguchi, K.~Sumi, and T.~Hirai, ``Fast graspability
  evaluation on single depth maps for bin picking with general grippers,'' in
  \emph{Robotics and Automation (ICRA), 2014 IEEE International Conference
  on}.\hskip 1em plus 0.5em minus 0.4em\relax IEEE, 2014, pp. 1997--2004.

\bibitem{eppner2016lessons}
C.~Eppner, S.~H{\"o}fer, R.~Jonschkowski, R.~M. Martin, A.~Sieverling, V.~Wall,
  and O.~Brock, ``Lessons from the amazon picking challenge: Four aspects of
  building robotic systems.'' in \emph{Robotics: Science and Systems}, 2016.

\bibitem{ferrari1992}
C.~Ferrari and J.~Canny, ``Planning optimal grasps,'' in \emph{{Proc. {IEEE}
  Int. Conf. Robotics and Automation (ICRA)}}, 1992, pp. 2290--2295.

\bibitem{goldberg1999part}
K.~Goldberg, B.~V. Mirtich, Y.~Zhuang, J.~Craig, B.~R. Carlisle, and J.~Canny,
  ``Part pose statistics: Estimators and experiments,'' \emph{IEEE Trans.
  Robotics and Automation}, vol.~15, no.~5, pp. 849--857, 1999.

\bibitem{hartley2003multiple}
R.~Hartley and A.~Zisserman, \emph{Multiple view geometry in computer
  vision}.\hskip 1em plus 0.5em minus 0.4em\relax Cambridge university press,
  2003.

\bibitem{hernandez2016team}
C.~Hernandez, M.~Bharatheesha, W.~Ko, H.~Gaiser, J.~Tan, K.~van Deurzen,
  M.~de~Vries, B.~Van~Mil, J.~van Egmond, R.~Burger, \emph{et~al.}, ``Team
  delft's robot winner of the amazon picking challenge 2016,'' \emph{arXiv
  preprint arXiv:1610.05514}, 2016.

\bibitem{jaderberg2015spatial}
M.~Jaderberg, K.~Simonyan, A.~Zisserman, \emph{et~al.}, ``Spatial transformer
  networks,'' in \emph{Advances in Neural Information Processing Systems},
  2015, pp. 2017--2025.

\bibitem{johns2016deep}
E.~Johns, S.~Leutenegger, and A.~J. Davison, ``Deep learning a grasp function
  for grasping under gripper pose uncertainty,'' in \emph{Proc. IEEE/RSJ Int.
  Conf. on Intelligent Robots and Systems (IROS)}.\hskip 1em plus 0.5em minus
  0.4em\relax IEEE, 2016, pp. 4461--4468.

\bibitem{kao1992quasistatic}
I.~Kao and M.~R. Cutkosky, ``Quasistatic manipulation with compliance and
  sliding,'' \emph{Int. Journal of Robotics Research (IJRR)}, vol.~11, no.~1,
  pp. 20--40, 1992.

\bibitem{kao2008contact}
I.~Kao, K.~Lynch, and J.~W. Burdick, ``Contact modeling and manipulation,'' in
  \emph{Springer Handbook of Robotics}.\hskip 1em plus 0.5em minus 0.4em\relax
  Springer, 2008, pp. 647--669.

\bibitem{kappler2015leveraging}
D.~Kappler, J.~Bohg, and S.~Schaal, ``Leveraging big data for grasp planning,''
  in \emph{{Proc. {IEEE} Int. Conf. Robotics and Automation (ICRA)}}, 2015.

\bibitem{kasper2012kit}
A.~Kasper, Z.~Xue, and R.~Dillmann, ``The kit object models database: An object
  model database for object recognition, localization and manipulation in
  service robotics,'' \emph{Int. Journal of Robotics Research (IJRR)}, vol.~31,
  no.~8, pp. 927--934, 2012.

\bibitem{kolluru1998modeling}
R.~Kolluru, K.~P. Valavanis, and T.~M. Hebert, ``Modeling, analysis, and
  performance evaluation of a robotic gripper system for limp material
  handling,'' \emph{IEEE Transactions on Systems, Man, and Cybernetics, Part B
  (Cybernetics)}, vol.~28, no.~3, pp. 480--486, 1998.

\bibitem{krug2017grasp}
R.~Krug, Y.~Bekiroglu, and M.~A. Roa, ``Grasp quality evaluation done right:
  How assumed contact force bounds affect wrench-based quality metrics,'' in
  \emph{Robotics and Automation (ICRA), 2017 IEEE International Conference
  on}.\hskip 1em plus 0.5em minus 0.4em\relax IEEE, 2017, pp. 1595--1600.

\bibitem{lenz2015deep}
I.~Lenz, H.~Lee, and A.~Saxena, ``Deep learning for detecting robotic grasps,''
  \emph{Int. Journal of Robotics Research (IJRR)}, vol.~34, no. 4-5, pp.
  705--724, 2015.

\bibitem{levine2016learning}
S.~Levine, P.~Pastor, A.~Krizhevsky, and D.~Quillen, ``Learning hand-eye
  coordination for robotic grasping with deep learning and large-scale data
  collection,'' \emph{arXiv preprint arXiv:1603.02199}, 2016.

\bibitem{mahler2017dex}
J.~Mahler, J.~Liang, S.~Niyaz, M.~Laskey, R.~Doan, X.~Liu, J.~A. Ojea, and
  K.~Goldberg, ``Dex-net 2.0: Deep learning to plan robust grasps with
  synthetic point clouds and analytic grasp metrics,'' in \emph{Proc. Robotics:
  Science and Systems (RSS)}, 2017.

\bibitem{mahler2016dexnet}
J.~Mahler, F.~T. Pokorny, B.~Hou, M.~Roderick, M.~Laskey, M.~Aubry,
  K.~Kohlhoff, T.~Kr{\"o}ger, J.~Kuffner, and K.~Goldberg, ``Dex-net 1.0: A
  cloud-based network of 3d objects for robust grasp planning using a
  multi-armed bandit model with correlated rewards,'' in \emph{{Proc. {IEEE}
  Int. Conf. Robotics and Automation (ICRA)}}.\hskip 1em plus 0.5em minus
  0.4em\relax IEEE, 2016.

\bibitem{mantriota2007theoretical}
G.~Mantriota, ``Theoretical model of the grasp with vacuum gripper,''
  \emph{Mechanism and machine theory}, vol.~42, no.~1, pp. 2--17, 2007.

\bibitem{murray1994mathematical}
R.~M. Murray, Z.~Li, and S.~S. Sastry, \emph{A mathematical introduction to
  robotic manipulation}.\hskip 1em plus 0.5em minus 0.4em\relax CRC press,
  1994.

\bibitem{pinto2016supersizing}
L.~Pinto and A.~Gupta, ``Supersizing self-supervision: Learning to grasp from
  50k tries and 700 robot hours,'' in \emph{{Proc. {IEEE} Int. Conf. Robotics
  and Automation (ICRA)}}, 2016.

\bibitem{provot1995deformation}
X.~Provot \emph{et~al.}, ``Deformation constraints in a mass-spring model to
  describe rigid cloth behaviour,'' in \emph{Graphics interface}.\hskip 1em
  plus 0.5em minus 0.4em\relax Canadian Information Processing Society, 1995,
  pp. 147--147.

\bibitem{rubinstein2013fast}
R.~Y. Rubinstein, A.~Ridder, and R.~Vaisman, \emph{Fast sequential Monte Carlo
  methods for counting and optimization}.\hskip 1em plus 0.5em minus
  0.4em\relax John Wiley \& Sons, 2013.

\bibitem{saxena2008robotic}
A.~Saxena, J.~Driemeyer, and A.~Y. Ng, ``Robotic grasping of novel objects
  using vision,'' \emph{The International Journal of Robotics Research},
  vol.~27, no.~2, pp. 157--173, 2008.

\bibitem{stuart2015suction}
H.~S. Stuart, M.~Bagheri, S.~Wang, H.~Barnard, A.~L. Sheng, M.~Jenkins, and
  M.~R. Cutkosky, ``Suction helps in a pinch: Improving underwater manipulation
  with gentle suction flow,'' in \emph{Intelligent Robots and Systems (IROS),
  2015 IEEE/RSJ International Conference on}.\hskip 1em plus 0.5em minus
  0.4em\relax IEEE, 2015, pp. 2279--2284.

\bibitem{tsourveloudis2000suction}
N.~C. Tsourveloudis, R.~Kolluru, K.~P. Valavanis, and D.~Gracanin, ``Suction
  control of a robotic gripper: A neuro-fuzzy approach,'' \emph{Journal of
  Intelligent \& Robotic Systems}, vol.~27, no.~3, pp. 215--235, 2000.

\bibitem{valencia20173d}
A.~J. Valencia, R.~M. Idrovo, A.~D. Sappa, D.~P. Guingla, and D.~Ochoa, ``A 3d
  vision based approach for optimal grasp of vacuum grippers,'' in
  \emph{Electronics, Control, Measurement, Signals and their Application to
  Mechatronics (ECMSM), 2017 IEEE International Workshop of}.\hskip 1em plus
  0.5em minus 0.4em\relax IEEE, 2017, pp. 1--6.

\bibitem{weisz2012pose}
J.~Weisz and P.~K. Allen, ``Pose error robust grasping from contact wrench
  space metrics,'' in \emph{{Proc. {IEEE} Int. Conf. Robotics and Automation
  (ICRA)}}.\hskip 1em plus 0.5em minus 0.4em\relax IEEE, 2012, pp. 557--562.

\bibitem{wohlkinger20123dnet}
W.~Wohlkinger, A.~Aldoma, R.~B. Rusu, and M.~Vincze, ``3dnet: Large-scale
  object class recognition from cad models,'' in \emph{{Proc. {IEEE} Int. Conf.
  Robotics and Automation (ICRA)}}.\hskip 1em plus 0.5em minus 0.4em\relax
  IEEE, 2012, pp. 5384--5391.

\bibitem{yoshida2010design}
Y.~Yoshida and S.~Ma, ``Design of a wall-climbing robot with passive suction
  cups,'' in \emph{Robotics and Biomimetics (ROBIO), 2010 IEEE International
  Conference on}.\hskip 1em plus 0.5em minus 0.4em\relax IEEE, 2010, pp.
  1513--1518.

\bibitem{yu2016summary}
K.-T. Yu, N.~Fazeli, N.~Chavan-Dafle, O.~Taylor, E.~Donlon, G.~D. Lankenau, and
  A.~Rodriguez, ``A summary of team mit's approach to the amazon picking
  challenge 2015,'' \emph{arXiv preprint arXiv:1604.03639}, 2016.

\end{thebibliography}

\end{document}